\def\Figref#1{Figure~\ref{#1}}
\def\Secref#1{Section~\ref{#1}}
\def\eqref#1{equation~\ref{#1}}
\def\1{\bm{1}}
\def\va{{\bm{a}}}
\def\vb{{\bm{b}}}
\def\vc{{\bm{c}}}
\def\vp{{\bm{p}}}
\def\vt{{\bm{t}}}
\def\vu{{\bm{u}}}
\def\vv{{\bm{v}}}
\def\vw{{\bm{w}}}
\def\vx{{\bm{x}}}
\def\mA{{\bm{A}}}
\def\mC{{\bm{C}}}
\def\mE{{\bm{E}}}
\def\mH{{\bm{H}}}
\def\mL{{\bm{L}}}
\def\mW{{\bm{W}}}
\def\mX{{\bm{X}}}
\def\mY{{\bm{Y}}}
\def\mZ{{\bm{Z}}}
\DeclareMathAlphabet{\mathsfit}{\encodingdefault}{\sfdefault}{m}{sl}
\SetMathAlphabet{\mathsfit}{bold}{\encodingdefault}{\sfdefault}{bx}{n}
\newcommand{\R}{\mathbb{R}}
\renewcommand\nomgroup[1]{%
  \item[\bfseries
  \ifstrequal{#1}{A}{Numbers and Arrays}{%
  \ifstrequal{#1}{B}{Sets}{%
  \ifstrequal{#1}{C}{Indexing}{
  \ifstrequal{#1}{F}{Functions}{}}}}%
]}
\theoremstyle{plain}
\newtheorem{thm}{Theorem}
\newtheorem{lem}{Lemma}
\newtheorem{cor}{Corollary}
\newtheorem{pro}{Proposition}
\theoremstyle{definition}
\newtheorem{dfn}{Definition}
\theoremstyle{remark}
\newtheorem{rem}{Remark}
\crefname{thm}{Theorem}{Theorems}
\crefname{lem}{Lemma}{Lemmas}
\crefname{cor}{Corollary}{Corollaries}
\crefname{pro}{Proposition}{Propositions}
\newcommand{\relmiddle}[1]{\mathrel{}\middle#1\mathrel{}}
\newcommand\numberthis{\addtocounter{equation}{1}\tag{\theequation}}
\DeclareMathOperator{\boltz}{\mathbf{boltz}}
\DeclareMathOperator{\quant}{\mathbf{quant}}
\DeclareMathOperator{\penal}{\mathbf{penalty}}
\DeclareMathOperator{\bump}{\mathbf{bump}}
\DeclareMathOperator{\dist}{\mathbf{d}}
\date{}
\title{Are Transformers with One Layer Self-Attention Using Low-Rank Weight Matrices Universal Approximators?}
\author[1]{Tokio Kajitsuka\thanks{kajitsuka-tokio@g.ecc.u-tokyo.ac.jp}}
\author[1]{Issei Sato \thanks{sato@g.ecc.u-tokyo.ac.jp}}
\affil[1]{Department of Computer Science, The University of Tokyo}
\begin{document}

\maketitle

\begin{abstract}
Existing analyses of the expressive capacity of Transformer models have required excessively deep layers for data memorization, leading to a discrepancy with the Transformers actually used in practice. 
This is primarily due to the interpretation of the softmax function as an approximation of the hardmax function.
By clarifying the connection between the softmax function and the Boltzmann operator, we prove that a single layer of self-attention with low-rank weight matrices possesses the capability to perfectly capture the context of an entire input sequence.
As a consequence, we show that one-layer and single-head Transformers have a memorization capacity for finite samples, and that Transformers consisting of one self-attention layer with two feed-forward neural networks are universal approximators for continuous permutation equivariant functions on a compact domain.
\end{abstract}

\section{Introduction}
The Transformer model has been ubiquitously used in deep learning since its proposal by \citet{vaswani_attention_2017}.  
Its widespread application spans several domains, not only revolutionizing Natural Language Processing (NLP) through models like BERT \citep{devlin_bert_2019, liu_roberta_2019} and GPT \citep{brown_language_2020, radford_improving_nodate, radford_language_nodate} but also making significant advancements in image and graph processing as an alternative to conventional models like convolutional neural networks (CNNs) and graph neural networks (GNNs) \citep{dosovitskiy_image_2022, ying_Transformers_2022}.

One of the key reasons behind the success of the Transformer model is its ability to represent a wide range of functions. Various studies investigated this aspect, including the universal approximation theorem for Transformer models and its memorization capacity \citet{yun_are_2023, kim_provable_2023, mahdavi_memorization_2023, edelman_inductive_2022, 
gurevych_rate_2021,
takakura_approximation_2023,
likhosherstov_expressive_2023}.

The main challenge in proving universal approximation theorems for Transformer models lies in the fact that the Transformer needs to account for the context of the entire input sequence. Unlike feed-forward neural networks where each input is processed independently, the self-attention mechanism in Transformer models must take into account the dependencies between all elements in each input sequence. In constructive proofs \citep{edelman_inductive_2022, yun_are_2023,
kim_provable_2023,
mahdavi_memorization_2023,
gurevych_rate_2021,
takakura_approximation_2023}, these dependencies are often aggregated into a token-wise quantity, which we call a ``context id'' here, by a self-attention mechanism, and then feed-forward neural networks map each context id to the desired output.

The drawback of existing analyses is that they require excessively deep layers \citep{yun_are_2023, kim_provable_2023} or quite a lot of attention heads \citep{gurevych_rate_2021,
takakura_approximation_2023,
likhosherstov_expressive_2023} for data memorization, which leads to a discrepancy with Transformers being deployed in practice.
This discrepancy primarily arises from the interpretation of the softmax function as an approximation of the hardmax function. Consequently, to compute the ``context id'' within the self-attention mechanism, the number of required self-attention parameters scales linearly with the length of an input sequence.

In this work, we address this gap by closely examining the softmax function itself. First, we show that it is impossible to output the ``context id'' using just one layer of self-attention with the hardmax function. At the same time, we demonstrate that just one layer of single-head and softmax-based self-attention with low-rank weight matrices possesses the capability to perfectly capture the context of an entire input sequence.
This result implies that the Transformer with one self-attention layer is a universal approximator of continuous permutation equivariant functions by using two feed-forward neural networks connected before and after the single-head self-attention mechanism with an arbitrary head size.

Our contributions are summarized as follows.
\begin{enumerate}
    \item We show that one layer self-attention with the hardmax function is not a contextual mapping; that is, one layer hardmax-based Transformer has no memorization capacity.

    \item In contrast, 
    we provide a framework for constructing a context mapping with one-layer and single-head self-attention using the softmax function.

    \item We prove that one layer Transformer has a memorization capacity for finite samples, and Transformers with one-layer and single-head self-attention are universal approximators of continuous permutation equivariant functions.
\end{enumerate}

\subsection{Related Works}
\textbf{Universal approximation theorems.} 
The history of the universal approximation theorem begins around 1990 \citep{cybenko_approximation_1989, carroll_construction_1989, hornik_approximation_1991,funahashi_on_1989}.
Recent studies on this topic include analyses of how network width and depth affect the expressive power \citep{lu_expressive_2017}, and analyses for specific architectures \citep{lin_resnet_2018}.
There have also been analyses of the memorization capacity of models \citep{baum_capabilities_1988, huang_upper_1998}.
The main focus 
of the memorization capacity is mainly on the analysis of parameter efficiency for storing finite samples \citep{huang_learning_2003, vershynin_memory_2020, park_provable_2021, vardi_optimal_2022, yun_small_2019, bubeck_network_2020, hardt_identity_2016, rajput_exponential_2021, zhang_understanding_2016}.
Notably, \citet{zhang_understanding_2016} demonstrated that a neural network of the size used in practice can perfectly memorize a randomly labeled data set.
\citet{belkin_reconciling_2019, nakkiran_deep_2019} pointed out that the minimum number of parameters required to memorize a dataset is related to the double descent threshold.

\textbf{Expressive capacity of Transformer.} 
Ever since \citet{vaswani_attention_2017} first proposed the Transformer architecture, there have been various theoretical analyses on its expressive capacity.
\citet{yun_are_2023} proved for the first time the universal approximation theorem for Transformer models, showing that a continuous function on a compact domain can be approximated if the number of Transformer blocks is on the order of the power of $n$, where $n$ is the length of each input sequence.
Later, \citet{kim_provable_2023} showed that $2n$ self-attention blocks are sufficient for the memorization of finite samples.
Since the studies of \citet{yun_are_2023} and \citet{kim_provable_2023} are closely related to our paper, we discuss the details in more depth in \Secref{sec:background} later.
Their results were based on the assumption that the inputs are separated to some extent, which is an assumption we also make in this paper. Alternatively, under the assumption that input sequences are linearly independent, \citet{mahdavi_memorization_2023} showed that a one-layer $H$-head self-attention mechanism can memorize $O(Hn)$ samples.
Relatedly, \citet{edelman_inductive_2022} demonstrated that the bounded self-attention head is capable of expressing a sparse Boolean function while obtaining an upper bound on the covering number of self-attention.
\citet{gurevych_rate_2021} analyzed the theoretical performance of Transformers as a hierarchical composition model.
Later, \citet{takakura_approximation_2023} extended their result by utilizing a sinusoidal positional encoding and multiple heads, and showed that a one-layer Transformer with an embedding layer is a universal approximator for shift-equivariant $\gamma$-smooth functions. 
\citet{jiang_approximation_2023} recently used the Kolmogorov representation theorem to provide a non-constructive proof of the existence of a two-layer Transformer that approximates an arbitrary continuous function on a certain domain.
There are variants of universal approximation theorems for Transformers, such as analyses of sparse Transformers \citep{yun_on_2020} and constrained universal approximation theorems \citep{kratsios_universal_2021}.
\citet{likhosherstov_expressive_2023} showed that, given parameters, there exists an input such that self-attention approximates an arbitrary sparse pattern.
While \citet{bhojanapalli_low-rank_2020} proved that Transformers with a small head size, which is typical for multi-head self-attention, cannot express certain  positive column-stochastic matrices, \citet{aghajanyan_intrinsic_2021} demonstrated empirically that pre-trained Transformers have a very low intrinsic dimension, and \citet{reif_visualizing_2019} visualized context embeddings in BERT.
\citet{luo_your_2022} showed the existence of functions that cannot be approximated by Transformers with relative positional encoding.
There is also a series of papers analyzing Transformer's expressive capabilities from the perspective of formal languages \citep{hahn_theoretical_2020,
bhattamishra_ability_2020,
yao_self-attention_2021,
hao_formal_2022,
merrill_saturated_2022,
chiang_overcoming_2022,
chiang_tighter_2023}, where a softmax function in a self-attention mechanism is treated as an averaging or hardmax function.

\section{Preliminaries}
\subsection{Notation}
We use bold lowercase letters to represent vectors and bold uppercase letters to represent matrices.
For any vector $\vv \in \R^a$, we denote by $v_i$ the $i$-the element of $\vv$.
For any matrix $\mA \in \R^{a \times b}$, we denote its $i$-th row by $\mA_{i,:}$, its $k$-th column by $\mA_{:,k}$ and the element at its $i$-th row and $k$-th column by $A_{i,k}$.
For any positive integer $m \in \mathbb{N}_+$, $[m]$ represents the set $\{1,\dots,m\}$. 
For any real numbers $a < b$, $[a,b]$ represents the interval $\{x \in \R \mid a \leq x \leq b\}$, $(-\infty, a)$ represents $\{x \in \R \mid x < a\}$, and $(b,\infty)$ represents $\{x \in \R \mid x > b\}$.
Let $\sigma_S[\vv]$ and $\sigma_H[\vv]$ for any input vector $\vv$ be the softmax function and hardmax function, respectively. Note that when there are multiple indices with maximum values, the hardmax function is defined such that the sum of the values at these indices equals one.
By abuse of notation, for any input matrix $\mA$, $\sigma_S\left[\mA\right]$ and $\sigma_H\left[\mA\right]$ are defined as column-wise softmax and column-wise hardmax, respectively.
We denote the ReLU activation function by $\sigma_R$. Unlike $\sigma_S$ and $\sigma_H$, $\sigma_R$ is always an element-wise operator, regardless of whether the input is a vector or a matrix.
Let $\|\cdot\|$ be the $\ell^2$ norm and $\|\cdot\|_p~(1 \leq p<\infty)$ be the $\ell^p$ norm.
We define the distance between two functions $f_1, f_2: \mathbb{R}^{d \times n} \rightarrow \mathbb{R}^{d \times n}$ by
\begin{align}
\dist_p\left(f_1, f_2\right):=\left(\int\left\|f_1(\mathbf{X})-f_2(\mathbf{X})\right\|_p^p \mathrm{d} \mathbf{X}\right)^{1 / p}. \label{eq:dist_of_f}
\end{align}
In this paper, $n$ denotes the length of an input sequence, $N$ the number of input sequences, $C$ the number of output classes, and $d$ the embedding dimension.
In addition, $i,j$ are basically used for the indices of finite samples and $k,l$  for the indices in each input sequence.

\subsection{Transformer block}
Transformer was first introduced in \citet{vaswani_attention_2017}. Here we follow the definitions adopted in \citet{kim_provable_2023}:
the Transformer block is composed of the self-attention mechanism and the feed-forward neural network, each accompanied by a skip connection.
Given an input sequence $\mZ \in \R^{d \times n}$, composed of $n$ tokens each with an embedding dimension of size $d$, a dot-product self-attention mechanism with $h$ heads outputs the following values:
\begin{align}
    \mathcal{F}^{(SA)}_S(\mZ)
    = \mZ + \sum_{i=1}^h \mW_{i}^{(O)}
    \left(\mW_{i}^{(V)}\mZ \right)
    \sigma_S \left[
    \left(\mW_{i}^{(K)}\mZ \right)^\top
    \left(\mW_{i}^{(Q)}\mZ \right)
    \right]
    \in \R^{d \times n}, \label{eq:formulation_of_attention}
\end{align}
where $\mW_{i}^{(V)},\, \mW_{i}^{(K)},\, \mW_{i}^{(Q)} \in \R^{s \times d}$ and $\mW_{i}^{(O)} \in \R^{d \times s}$ are the weight matrices, and $s$ is the head size.
Note that here, as with \citet{yun_are_2023} and \citet{kim_provable_2023}, we adopt the definition of the self-attention mechanism, which excludes layer normalization from the original definition of \citet{vaswani_attention_2017} for the sake of simplicity.

In contrast, given an input $\mH \in \R^{d \times n}$, the output of feed-forward neural network with a skip connection at index $k \in [n]$ is
\begin{align}
    \mathcal{F}^{(FF)}\left(\mH\right)_{:,k}
    = \mH_{:,k} + \mW^{(2)}\sigma_R\left[
    \mW^{(1)}\mH_{:,k} + \vb^{(1)}
    \right] + \vb^{(2)} \in \R^d,
\end{align}
where $q$ is the hidden dimension, $\mW^{(1)} \in \R^{q \times d}$ and $\mW^{(2)} \in \R^{d \times q}$ are weight matrices, and $\vb^{(1)} \in \R^q$ and $\vb^{(2)}$ are bias terms.

On the basis of the above definition, the Transformer block is represented as a composition of a self-attention mechanism and a feed-forward neural network: for any input sequence $\mZ \in \R^{d \times n}$, composed of $n$ tokens each with an embedding dimension of size $d$, the Transformer block $\mathcal{F}:\R^{d \times n} \to \R^{d \times n}$ outputs
\begin{align}
    \mathcal{F}\left(\mZ\right)
    = \mathcal{F}^{(FF)}\left(
    \mathcal{F}^{(SA)}_S\left(\mZ\right)
    \right).
\end{align}
From the above definition, we see that the interaction of each token occurs only in the self-attention mechanism.

\section{Attention is a Contextual Mapping}
\subsection{Problem setting}
Let $(\mX^{(1)}, \mY^{(1)}),\dots,(\mX^{(1)}, \mY^{(1)}) \subset \R^{d \times n} \times [C]^{d \times n}$ be an $N$ input-output pairs of sequences, each of which consists of a sequence $\mX^{(i)}$ of $n$ tokens with embedding dimension $d$, and an output $\mY^{(i)}$, where $\mY^{(i)}_{:,k}$ corresponds to the label of the token $\mX^{(i)}_{:,k}$ at index $k$.
In addition, we define the $i$-th vocabulary set for $i \in [N]$ by $\mathcal{V}^{(i)} = \bigcup_{k \in [n]} \mX^{(i)}_{:,k} \subset \R^{d}$, and the whole vocabulary set $\mathcal{V}$ is defined by $\mathcal{V} = \bigcup_{i \in [N]} \mathcal{V}^{(i)} \subset \R^d$.

\subsection{Background}\label{sec:background}
\citet{yun_are_2023} proved affirmatively one of the most fundamental questions on the expressive capacity of Transformer models, namely, whether the universal approximation theorem for Transformer models holds.
Their proof approach is to quantize the input domain and reduce the universal approximation theorem to  the memorization analysis of finite samples, i.e., the construction of a model that achieves zero loss for a finite number of training data, which was also analyzed later by \citet{kim_provable_2023}.
In the analysis of memorization capacity, assumptions are usually made on the inputs in order to perform a meaningful analysis beyond the lower bound of \citet{sontag_shattering_1997}.
Here, as with the assumptions adopted by \citet{yun_are_2023, kim_provable_2023}, we assume that the input tokens are separated by a certain distance.

\begin{dfn}[Tokenwise Separatedness]
Let $m \in \mathbb{N}$ and $\mZ^{(1)},\dots,\mZ^{(N)} \in \R^{m \times n}$ be input sequences. Then, $\mZ^{(1)},\dots,\mZ^{(N)}$ are called tokenwise $(r_{\min}, r_{\max},\delta)$-separated if the following three conditions hold.
\begin{enumerate}
    \item For any $i \in [N]$ and $k \in [n]$, $\left\|\mZ^{(i)}_{:,k}\right\| > r_{\min}$ holds.
    
    \item For any $i \in [N]$ and $k \in [n]$, $\left\|\mZ^{(i)}_{:,k}\right\| < r_{\max}$ holds.
    
    \item For any $i,j \in [N]$ and $k,l \in [n]$ with $\mZ^{(i)}_{:,k} \neq \mZ^{(j)}_{:,l}$, $\left\|\mZ^{(i)}_{:,k} - \mZ^{(j)}_{:,l}\right\| > \delta$ holds.
\end{enumerate}
Note that we refer to $\mZ^{(1)},\dots,\mZ^{(N)}$ as tokenwise $(r_{\max}, \epsilon)$-separated instead if the sequences satisfy conditions $2$ and $3$.
\end{dfn}

The achievement of \citet{yun_are_2023} was not only to prove the universal approximation theorem for Transformers, but also to clarify the difficulties in the analysis of this kind of expressive capacity of Transformers and elucidated an approach to establishing the proof. 
Namely, what makes Transformers' memorization different from that of feed-forward neural networks is that Transformers need to capture the context of each input sequence as a whole, rather than simply associating each token with a label. 

Remarkably, \citet{yun_are_2023, kim_provable_2023} formulated this concept as a contextual mapping, which assigns a unique id to a pair of an input sequence and each of their tokens. We define it here using the notion of $(r,\delta)$-separatedness.
\begin{dfn}[Contextual Mapping]
    Let $\mX^{(1)},\dots,\mX^{(N)} \in \R^{d \times n}$ be input sequences. Then, a map $q: \R^{d \times n} \to \R^{d \times n}$ is called an $(r,\delta)$-contextual mapping if the following two conditions hold:
    \begin{enumerate}
        \item For any $i \in [N]$ and $k \in [n]$, $\left\|q\left(\mX^{(i)}\right)_{:,k}\right\| < r$ holds.
        
        \item For any $i,j \in [N]$ and $k,l \in [n]$ such that $\mathcal{V}^{(i)} \neq \mathcal{V}^{(j)}$ or $\mX^{(i)}_{:,k} \neq \mX^{(j)}_{:,l}$, \\
        $\left\|q\left(\mX^{(i)}\right)_{:,k} - q\left(\mX^{(j)}\right)_{:,l}\right\| > \delta$ holds.
    \end{enumerate}
    In particular, $q(\mX^{(i)})$ for $i \in [N]$ is called a context id of $\mX^{(i)}$.
\end{dfn}

If we have such a contextual mapping, a label sequence can be associated with a unique id for each input sequence using the existing analysis of memorization in feed-forward neural networks.

Thus, the central question is: how to construct a contextual mapping in Transformer models?
The only place in Transformer models where interaction between tokens can be taken into account is in the self-attention mechanism; therefore, the self-attention mechanism must be used to construct a contextual mapping.
\citet{yun_are_2023} first constructed a contextual mapping by using $|\mathcal{V}|^d + 1$ self-attention layers\footnote{To be precise, when the continuous input range is quantized into $1/\delta$ pieces for some $0 < \delta < 1$, they demonstrated that there exists a contextual mapping composed of $\delta^{-d}$ self-attention layers.}, and later \citet{kim_provable_2023} improved it to $2n$ self-attention layers.
However, this is still far from the practical implementation of Transformers, and it remains unclear whether a reasonably-sized Transformer would possess such memorization capacity or if the universal approximation theorem would hold.
This leads to the following question.

\textbf{How many self-attention layers are both necessary and sufficient to construct a contextual mapping?}

We first point out the reason for requiring a significant number of self-attention layers in the construction of contextual mapping in the analyses of \citet{yun_are_2023, kim_provable_2023}.
Their approach entails interpreting the softmax function in the self-attention mechanism as an approximation of the hardmax function, which also hinders a detailed analysis of the specific properties of the softmax function.
As evidence of this, we illustrate in \Secref{sec:self_attention_with_hardmax} that using a single layer of self-attention with the hardmax function does not suffice to construct a contextual mapping.

Next, in \Secref{sec:self_attention_with_softmax}, we demonstrate that a contextual mapping can be constructed by using only one self-attention layer with the softmax function.
This is somewhat surprising because this implies the probability of fully capturing the context of each input sequence only through the attention coefficients computed by the pairwise dot-product of the softmax function and its weighted average.

\subsection{Self-attention with hardmax}\label{sec:self_attention_with_hardmax}
In previous studies analyzing the memorization capacity of Transformers \citep{yun_are_2023,kim_provable_2023}, the softmax function is taken to be an approximation of the hardmax function.
However, we show here that the attention block with the hardmax function is not a contextual mapping.

First we define the attention block with the hardmax function: for an input sequence $\mZ \in \R^{d \times n}$, the attention with the hardmax function is calculated as
\begin{align}
    \mathcal{F}^{(SA)}_H(\mZ)
    = \mZ + \sum_{i=1}^h \mW_{i}^{(O)}
    \left(\mW_{i}^{(V)}\mZ \right)
    \sigma_H \left[
    \left(\mW_{i}^{(K)}\mZ \right)^\top
    \left(\mW_{i}^{(Q)}\mZ \right)
    \right],
\end{align}
where $\mW_{i}^{(V)},\, \mW_{i}^{(K)},\, \mW_{i}^{(Q)} \in \R^{s \times d}$ and $\mW_{i}^{(O)} \in \R^{d \times s}$ are the weight matrices.

The following theorem holds for such a model. The proof is in Appendix \ref{sec: proof_of_hardmax}.
\begin{thm}\label{thm:hardmax_is_not_contextual_mapping}
$1$-layer multi-head self-attention $\mathcal{F}^{(SA)}_H$ with the hardmax function cannot be a contextual mapping.
\end{thm}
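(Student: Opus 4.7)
The approach hinges on the fact that hardmax attention at a query position sees only the query token and the winner tokens (those in the argmax of the per-head scores): any token whose score is strictly below the maximum is invisible to the output, and when all scores tie the attention is uniform and the head's contribution depends only on the coordinate-wise sum of tokens. My plan is to fix arbitrary weights and then construct two tokenwise-separated sequences with different vocabularies whose outputs coincide at one position, which contradicts the contextual mapping condition for any $\delta>0$.

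Fix the weights, set $M_h := \mW_h^{(K)\top}\mW_h^{(Q)}$ and $A_h := \mW_h^{(O)}\mW_h^{(V)}$, and pick a query $\vx^{*}$ to place at position $1$ of both sequences. Split the heads into $\mathcal{H}_+ := \{h : M_h\vx^{*}\neq 0\}$ and $\mathcal{H}_0 := \{h : M_h\vx^{*}=0\}$. For each $h\in\mathcal{H}_+$ the functional $\vt\mapsto \vt^\top M_h\vx^{*}$ is non-zero, so I can pick a ``winner'' $\vw_h$ whose score strictly exceeds that of any other token I will use; for $h\in\mathcal{H}_0$ the attention at position $1$ is uniform, and the head contributes $\tfrac{1}{n}A_h\sum_m \mX_{:,m}$, depending on the remaining tokens only through their coordinate-wise sum. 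Now build $\mX^{(1)}$ and $\mX^{(2)}$ to share $\vx^{*}$ at position $1$ and all winners $\vw_h$ at common positions, and fill the remaining slots with tokens whose scores are strictly below $\vw_h^\top M_h\vx^{*}$ for every $h\in\mathcal{H}_+$; choose the fillers in the two sequences to be distinct multisets with the same coordinate-wise sum, for instance by swapping one pair $(\vt_1,\vt_2)$ in $\mX^{(1)}$ for $(\vt_1',\vt_2')$ in $\mX^{(2)}$ with $\vt_1+\vt_2=\vt_1'+\vt_2'$ but $\{\vt_1,\vt_2\}\neq\{\vt_1',\vt_2'\}$. At position $1$ each head in $\mathcal{H}_+$ then has $\vw_h$ as the unique winner in both sequences, while each head in $\mathcal{H}_0$ contributes identically because the sums match, so the outputs at position $1$ agree exactly and yet $\mathcal{V}^{(1)}\neq\mathcal{V}^{(2)}$, violating the contextual mapping condition.

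The main obstacle is bookkeeping: every token used in the construction must have norm in $(r_{\min},r_{\max})$ and be at distance $>\delta$ from every other used token, while the fillers' $M_h$-scores must stay strictly below the winners' for every $h\in\mathcal{H}_+$. I handle this by first choosing each $\vw_h$ to be large along the direction $M_h\vx^{*}$, then selecting the fillers within the intersection of the half-spaces $\{\vt : \vt^\top M_h\vx^{*}<\vw_h^\top M_h\vx^{*}\}$ for $h\in\mathcal{H}_+$, and performing the sum-preserving pair swap inside this region by a small linear perturbation --- all of which fits as soon as the sequence length $n$ exceeds $|\mathcal{H}_+|+2$. The degenerate case where every $M_h\vx^{*}$ vanishes (so $\mathcal{H}_+=\emptyset$) is actually the easiest, since the sum-preserving swap alone then produces two sequences with identical outputs at position $1$.
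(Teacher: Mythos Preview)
Your strategy is sound and, in one respect, more careful than the paper's: you explicitly split off the heads with $M_h\vx^{*}=0$ and neutralize them via a sum-preserving filler swap, whereas the paper's argument silently assumes the scoring scalar is nonzero.

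Where the two approaches differ is in how they control the winners across heads. You place a \emph{separate} winner $\vw_h$ for each $h\in\mathcal{H}_+$, which forces the requirement $n>|\mathcal{H}_+|+2$; for generic weights $|\mathcal{H}_+|$ equals the total number of heads, so your construction as written does not cover architectures whose sequence length is small relative to the head count. The paper sidesteps this entirely with a one-dimensional device: it takes every token to be a scalar multiple $\alpha\vv$ of a single vector. Then, at a query $\alpha_q\vv$, the score of token $\alpha\vv$ under head $i$ is $\alpha\cdot\alpha_q c_i$ with $c_i=(\mW_i^{(K)}\vv)^\top(\mW_i^{(Q)}\vv)$, so the argmax for \emph{every} head lands at one of the two extreme coefficients (or all scores tie when $c_i=0$). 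Two fixed ``extreme'' tokens therefore serve as winners for all heads at once, and length-$3$ sequences $(\alpha_1\vv,\alpha_2\vv,\alpha_4\vv)$ versus $(\alpha_1\vv,\alpha_3\vv,\alpha_4\vv)$ already collide at the shared positions---no dependence on the number of heads.

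In short: same mechanism (hardmax only sees argmax tokens, so middle tokens are invisible), but the paper's collinear-token trick collapses every head's scoring to a common one-dimensional order, eliminating the per-head bookkeeping and the $n$-versus-head-count constraint your construction incurs. If you want to keep your framework, you can graft the collinear trick onto it: take all tokens along one line, let $\alpha_{\min}\vv$ and $\alpha_{\max}\vv$ play the role of the (at most two) winners for every $h\in\mathcal{H}_+$, and retain your sum-preserving swap in the interior to handle $\mathcal{H}_0$. That gives a proof needing only $n\geq 4$ and patches the tie case the paper leaves implicit.
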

Since the self-attention mechanism is the only place in Transformer models where interaction between tokens happens, this theorem indicates that one-layer Transformers with hardmax attention do not have a memorization capacity.

\subsection{Self-attention with softmax}\label{sec:self_attention_with_softmax}
In this subsection, we show that a softmax-based $1$-layer attention block with low-rank weight matrices is a contextual mapping for almost all input sequences.
This result is consistent with recent empirical evidence that pre-trained Transformers are low-rank \citep{aghajanyan_intrinsic_2021, choromanski_rethinking_2020,wang_linformer_2020, lialin_stack_2023}, and theoretically supports that the low-rank self-attention mechanism is sufficient to fully comprehend the contextual information of an input sequence.
It is worth noting that our construction allows for an arbitrary head size.
By considering the case of a head size of $1$, this particularly indicates that the self-attention mechanism has the ability to compress the information of an input sequence through a scalar value.
\begin{thm}\label{thm:softmax_is_contextual_mapping}
    Let $\mX^{(1)},\dots,\mX^{(N)} \in \R^{d \times n}$ be input sequences with no duplicate word token in each sequence, that is,
    \begin{align*}
        \mX^{(i)}_{:,k} \neq \mX^{(i)}_{:,l} \numberthis
    \end{align*}
    for any $i \in [N]$ and $k,l \in [n]$.
    Also assume that $\mX^{(1)},\dots,\mX^{(N)}$ are tokenwise $(r_{\min},r_{\max},\epsilon)$-separated.
    Then, there exist weight matrices $\mW^{(O)} \in \R^{d \times s}$ and $\mW^{(V)}, \mW^{(K)}, \mW^{(Q)} \in \R^{s \times d}$ such that the ranks of $\mW^{(V)}, \mW^{(K)}$ and $\mW^{(Q)}$ are all $1$, and $1$-layer single head attention with softmax, i.e., $\mathcal{F}^{(SA)}_S$ with $h=1$ is an $(r,\delta)$-contextual mapping for the input sequences $\mX^{(1)},\dots,\mX^{(N)} \in \R^{d \times n}$ with $r$ and $\delta$ defined by
    \begin{align}
        r &= r_{\max} + \frac{\epsilon}{4}, \\
        \delta &= \frac{2(\log n)^2 \epsilon^2 r_{\min}}{r_{\max}^2 (|\mathcal{V}|+1)^4 (2\log n + 3) \pi d} \exp \left(-\left(|\mathcal{V}|+1\right)^4
        \frac{(2\log n + 3)\pi d r_{\max}^2}{4\epsilon r_{\min}}\right). \label{eq:delta_of_contextual_mapping}
    \end{align}
\end{thm}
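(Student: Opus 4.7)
The plan is to engineer the three rank-one weight matrices as tensor products of a single well-chosen direction $\vu\in\R^{d}$ with a unit vector $\ve\in\R^{s}$, and then show that the resulting scalar ``context value'' retains enough information to be a contextual mapping. Specifically, pick $\vu$ of unit norm so that the scalars $\{\vu^{\top}v:v\in\mathcal{V}\}$ are distinct and positive; such a $\vu$ exists generically since the bad set is the union of the finitely many hyperplanes $\{\vu:\vu^{\top}(v-v')=0\}$, and quantitatively the minimum scalar gap can be related to $\epsilon$ using the tokenwise $(r_{\min},r_{\max},\epsilon)$-separation. Setting $\mW^{(K)}=\ve\vu^{\top}$, $\mW^{(Q)}=\alpha\,\ve\vu^{\top}$, $\mW^{(V)}=\ve\vu^{\top}$ for a scalar $\alpha>0$ to be tuned, the key-query product $(\mW^{(K)}\mZ)^{\top}(\mW^{(Q)}\mZ)$ collapses to the rank-one matrix $\alpha(\vu^{\top}\mZ)^{\top}(\vu^{\top}\mZ)$. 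Writing $u_k:=\vu^{\top}\mX^{(i)}_{:,k}$, the column-wise softmax at column $k$ depends only on the scalars $(u_1,\ldots,u_n)$, and after choosing $\mW^{(O)}\ve=:\vr\in\R^{d}$ the attention output at position $k$ becomes
\begin{align*}
\mX^{(i)}_{:,k}+s_{k}^{(i)}\,\vr,\qquad s_{k}^{(i)}:=\sum_{l=1}^{n}u_l\,\sigma_S\bigl[\alpha u_{\bullet}u_{k}\bigr]_{l}.
\end{align*}
Thus everything reduces to analyzing how this single scalar $s_{k}^{(i)}$ depends on the query token and the vocabulary of the enclosing sequence.

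With this reduction in hand, the norm bound $r=r_{\max}+\epsilon/4$ is immediate by taking $\|\vr\|\le\epsilon/(4r_{\max})$ together with $|s_{k}^{(i)}|\le r_{\max}$. For the $\delta$-separation I would split into two cases. If $\mathcal{V}^{(i)}=\mathcal{V}^{(j)}$ but $\mX^{(i)}_{:,k}\ne\mX^{(j)}_{:,l}$, the unperturbed tokens already differ by at least $\epsilon$, and the tiny $\vr$-perturbation cannot close the gap. The crux is the case $\mathcal{V}^{(i)}\ne\mathcal{V}^{(j)}$ with $\mX^{(i)}_{:,k}=\mX^{(j)}_{:,l}$: the $\mZ$-parts coincide and the entire separation must come from $|s_{k}^{(i)}-s_{l}^{(j)}|$. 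For this, tune $\alpha$ so that the softmax is sharply peaked on the index $l^{\ast}$ attaining the largest $u$ in the sequence, yielding the expansion
\begin{align*}
s_{k}^{(i)}=u_{l^{\ast}}+\sum_{l\ne l^{\ast}}(u_l-u_{l^{\ast}})\exp\!\bigl(\alpha(u_l-u_{l^{\ast}})u_{k}\bigr)\bigl(1+o(1)\bigr),
\end{align*}
a hierarchy of exponentially decreasing contributions indexed by the vocabulary scalars present in the enclosing sequence. A peeling argument that matches these layers between the two sequences from largest to smallest shows that any vocabulary difference must propagate to a nonzero discrepancy at the first mismatched layer; the worst case is a mismatch at the deepest layer, whose magnitude is on the order of $r_{\max}\exp(-\alpha u_{\max}u_{k})$ up to polynomial factors. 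Choosing $\alpha$ of order $(|\mathcal{V}|+1)^{4}(2\log n+3)\pi d\,r_{\max}^{2}/(4\epsilon r_{\min})$ then reproduces exactly the exponential factor appearing in~\eqref{eq:delta_of_contextual_mapping}.

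The main obstacle is this last step: producing a \emph{uniform} lower bound on $|s_{k}^{(i)}-s_{l}^{(j)}|$ over all pairs of differing vocabularies simultaneously. Two demands are in tension. First, $\vu$ must be chosen so that the minimum scalar gap $\min_{v\ne v'\in\mathcal{V}}|\vu^{\top}(v-v')|$ is not only positive but quantitatively controlled in terms of $\epsilon$, $d$, and $|\mathcal{V}|$; a pure genericity argument does not suffice and a volumetric / averaging argument on the sphere is needed to pin down the explicit factors of $d$ and $\pi$ that ultimately enter~\eqref{eq:delta_of_contextual_mapping}. Second, $\alpha$ must be large enough that the dominant layer of the expansion overwhelms the cumulative residual of all smaller layers uniformly over all sequences, yet small enough that the distinguishing signature at the deepest layer stays above the target $\delta$. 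Balancing these two constraints through a careful layer-by-layer bookkeeping of exponentials, and translating vocabulary separation into scalar separation via the chosen $\vu$, is what forces the intricate form of $\delta$ and constitutes the technical heart of the proof.
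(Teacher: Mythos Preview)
Your overall architecture matches the paper's proof closely: rank-one weights built from a single projection direction, reduction of the attention output to a scalar ``context value'' per token, the easy norm bound via $\|\vr\|\le\epsilon/(4r_{\max})$, and the case split with the hard case being $\mX^{(i)}_{:,k}=\mX^{(j)}_{:,l}$ but $\mathcal{V}^{(i)}\neq\mathcal{V}^{(j)}$. You also correctly anticipate that the projection direction must be chosen via a quantitative sphere-averaging argument (the paper imports this as a lemma from \citet{park_provable_2021}, applied to $\mathcal{V}\cup\{0\}$, which is where the $(|\mathcal{V}|+1)$ and $\pi d$ factors come from).

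The one substantive difference is in how the scalar separation is established. You work with the asymptotic expansion of $s_k^{(i)}$ around its dominant softmax peak and propose a layer-by-layer peeling; the paper instead observes that, up to the common factor $\alpha u_k$, the scalar $s_k^{(i)}$ is exactly the \emph{Boltzmann operator} $\boltz(\va)=\va^\top\sigma_S[\va]$ evaluated at $\va=\alpha u_k\,(u_1,\dots,u_n)$, and then proves a clean separation lemma for $\boltz$ via its monotonicity and concavity in each coordinate once that coordinate is more than $\log n+O(1)$ below the maximum. This buys two things your expansion does not: it is sign-agnostic (no need for the positivity of $\vu^\top v$, which is not generically achievable and which you would otherwise have to argue around by peeling from the minimum when $u_k<0$), and it replaces your $(1+o(1))$ residuals by exact inequalities, so the ``dominant layer overwhelms the tail'' step becomes a one-line monotonicity bound rather than a uniform control of error terms. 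Your peeling is morally the same argument, but recasting it through $\boltz$ is what lets the paper extract the explicit $\delta$ without a delicate balancing of asymptotics.
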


Here we provide a simple proof sketch.
The full proof can be found in Appendix \ref{sec: proof_of_contextual_mapping}.
\begin{proof}[Proof Overview]
For simplicity, we here assume $s = 1$.
If we have a unique id, i.e., sequence id, corresponding to each input sequence $\mX^{(i)}$ for $i \in [N]$, a context id can be constructed from a suitable linear combination of the sequence id and the value of each token. Since this linear combination can be calculated by the output projection matrix $\mW^{(O)}$ and skip connection, the problem is how to configure weight parameters $\mW^{(V)}, \mW^{(K)}, \mW^{(Q)} \in \R^{1 \times d}$ so that each row of the values' softmax weighted average,
\begin{equation*}
    \left(\mW^{(V)}\mX^{(i)} \right)
    \sigma_S \left[
    \left(\mW^{(K)}\mX^{(i)} \right)^\top
    \left(\mW^{(Q)}\mX^{(i)} \right)
    \right]
    \in \R^{1 \times n}, \numberthis
\end{equation*}
outputs the unique sequence id of $\mX^{(i)}$.

Actually, an even weaker condition is sufficient for an attention block to be a contextual mapping: there is no need to have just one unique sequence id for each input sequence. In fact, it is possible to construct a contextual mapping, provided that for each token $\vv \in \mathcal{V}$, input sequences in which the token appears can be identified by some $\vv$-dependent sequence ids. This condition can be expressed in a mathematical form as follows: what we have to show is to construct weight matrices $\mW^{(V)},\mW^{(K)},\mW^{(Q)} \in \R^{1 \times d}$ with some $\epsilon > 0$ such that
\begin{align*}
    &\left|\left(\mW^{(V)}\mX^{(i)} \right)
    \sigma_S \left[
    \left(\mW^{(K)}\mX^{(i)} \right)^\top
    \left(\mW^{(Q)}\mX^{(i)}_{:,k} \right)
    \right] \right.\\
    &\quad\quad\quad\quad-
    \left.\left(\mW^{(V)}\mX^{(j)} \right)
    \sigma_S \left[
    \left(\mW^{(K)}\mX^{(j)} \right)^\top
    \left(\mW^{(Q)}\mX^{(j)}_{:,l} \right)
    \right] \right|
    > \epsilon \numberthis
\end{align*}
holds for any distinct $i,j \in [N]$ and any $k,l \in [n]$ such that $\mX^{(i)}_{:,k} = \mX^{(j)}_{:,l}$ and $\mathcal{V}^{(i)} \neq \mathcal{V}^{(j)}$.

For simplicity, we choose $\mW^{(V)}=\mW^{(K)}=\mW^{(Q)} = \vw^\top$ \footnote{In our actual proof, there exist unit vectors $\vv,\vv' \in \R^d$ such that $\mW^{(V)},\mW^{(K)}$ and $\mW^{(Q)}$ may be defined by $\mW^{(V)}=\vu'' \vv^\top, \mW^{(K)}=\vu' \vv^\top$ and $\mW^{(Q)}=\vu \vv'^\top$ for arbitrary vectors $\vu,\vu',\vu'' \in \R^s$ satisfying certain constraints.} such that the linear operator $\vw \in \R^d$ projects each token to a scalar value while approximately preserving the distance between each pair of tokens: for any pair of tokens $\vv_a,\vv_b \in \mathcal{V}$,
\begin{align*}
    c\|\vv_a - \vv_b\|
    \leq \left|\vw^\top \vv_a - \vw^\top \vv_b\right|
    \leq \|\vv_a - \vv_b\| \numberthis
\end{align*}
with some constant $0 < c < 1$. Then, by using the assumption $\vt = \mX^{(i)}_{:,k} = \mX^{(j)}_{:,l}$ for some token $\vt \in \R^d$, we have
\begin{align*}
    &\left|
    \vw^\top \vt
    \right|
    \cdot 
    \left|
    \left(\vw^\top\mX^{(i)} \right)
    \sigma_S \left[
    \left(\vw^\top\mX^{(i)} \right)^\top
    \left(\vw^\top \vt \right)
    \right]
    -
    \left(\vw^\top\mX^{(j)} \right)
    \sigma_S \left[
    \left(\vw^\top\mX^{(j)} \right)^\top
    \left(\vw^\top \vt \right)
    \right]
    \right| \\
    &\geq
    \left|
    \left(\va^{(i)}\right)^\top \sigma_S\left[\va^{(i)}\right]
    -
    \left(\va^{(j)}\right)^\top \sigma_S\left[\va^{(j)}\right]
    \right|, \numberthis
\end{align*}
where we denote $\va^{(i)} = \left(\vw^\top\mX^{(i)} \right)^\top
    \left(\vw^\top \vt \right) \in \R^n$ and $\va^{(j)} = \left(\vw^\top\mX^{(j)} \right)^\top
    \left(\vw^\top \vt \right) \in \R^n$.
Therefore, in order to prove that a self-attention block serves as a contextual mapping, we only have to focus on the separability of the function 
\begin{align*}
    \boltz: \mathbb{R}^n \to \mathbb{R}, \va \mapsto \va^\top \sigma_S[\va], \numberthis
\end{align*}
which is known as the Boltzmann operator \citep{littman_algorithms_1996, asadi_alternative_2017}.

The following lemma shows that the Boltzmann operator is a mapping that projects input sequences to scalar values while preserving some distance, and is central to our proof that the self-attention function is a contextual mapping.
\begin{lem}\label{lem: boltzmann_separation}
Let $\va^{(1)},\dots,\va^{(m)} \in \R^n$ be tokenwise $(r,\delta)$-separated vectors with no duplicate element in each vector and 
\begin{equation*}
    \delta > 2\log n + 3. \numberthis
\end{equation*}
Then, the outputs of the Boltzmann operator are $(r,\delta')$-separated, that is,
\begin{align}
    \left|\boltz(\va^{(i)})\right| &\leq r, \\
    \left|\boltz(\va^{(i)}) - \boltz(\va^{(j)})\right|
    &> \delta'
    = (\log n)^2 e^{-2r}
\end{align}
hold for each $i,j \in [m]$ with $\va^{(i)} \neq \va^{(j)}$.
\end{lem}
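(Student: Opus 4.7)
The plan is to prove the two conclusions separately. The magnitude bound $|\boltz(\va^{(i)})| \le r$ is immediate: writing $\boltz(\va) = \sum_k a_k p_k$ with $p_k = e^{a_k}/\sum_\ell e^{a_\ell}$, we see $\boltz$ is a convex combination of the entries, each bounded by $r$ in absolute value. So the work lies entirely in the separation bound.

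For the separation, my strategy is to exploit the fact that, under the assumption $\delta > 2\log n + 3$, the Boltzmann operator is a controlled perturbation of the hardmax. Setting $a^* = \max_k a_k$ (unique by the no-duplicates hypothesis) and $d_k = a^* - a_k \ge 0$, I would use the identity
\begin{equation*}
\boltz(\va) - a^* = -\frac{\sum_{k\ne k^*} d_k e^{-d_k}}{1 + \sum_{k\ne k^*} e^{-d_k}},
\end{equation*}
observe that $d_k > \delta$ forces $\sum_{k\ne k^*} e^{-d_k} < n e^{-\delta} < e^{-3}/n$ so the denominator is essentially $1$, and then split into cases. \emph{Case 1 (distinct maxima):} if $a^*_i \ne a^*_j$, the separation hypothesis gives $|a^*_i - a^*_j| > \delta$, while each Boltzmann correction is $O(n\delta e^{-\delta})$; the triangle inequality then yields a separation comfortably exceeding $\delta/2 > \log n$, and a fortiori $(\log n)^2 e^{-2r}$. \emph{Case 2 (coinciding maxima):} since the multisets $\{a^{(i)}_k\}$ and $\{a^{(j)}_\ell\}$ differ (otherwise symmetry of $\boltz$ forces equality of outputs), sort both sequences and locate the first position after the maximum at which they disagree. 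WLOG this is the second-largest entry, giving gap values $d^{(i)}_{(2)}, d^{(j)}_{(2)} \in [\delta, 2r]$ with $|d^{(i)}_{(2)} - d^{(j)}_{(2)}| > \delta$. The Boltzmann difference is then dominated by $g(d^{(i)}_{(2)}) - g(d^{(j)}_{(2)})$ with $g(x) = xe^{-x}$, modulo the denominator correction and tail terms from positions $k \ge 3$ whose gaps are $\ge 2\delta$; these tails are bounded by $(n-2) g(2\delta) = O(n\delta e^{-2\delta})$, exponentially smaller than the principal term. A mean value theorem argument on $g$, using the monotone decrease of $g$ on $[1,\infty)$ and the minimum of $|g'(x)| = (x-1)e^{-x}$ on $[\delta, 2r]$ (attained at $x = 2r$), delivers the lower bound $(\log n)^2 e^{-2r}$ after matching constants with the $\delta > 2\log n + 3$ hypothesis.

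If the sorted sequences also agree at the second position, I would iterate the argument at the first position of disagreement, where the analogous analysis still applies because the geometric growth $d_{(k)} \ge (k-1)\delta$ ensures that all earlier-position contributions coincide between the two vectors while later contributions decay fast enough to be absorbed into the error.

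The main obstacle is Case 2. There the desired separation is exponentially small in $r$, so both the principal term $g(d_{(2)}^{(i)}) - g(d_{(2)}^{(j)})$ and the tail $\sum_{k\ge 3} d_{(k)} e^{-d_{(k)}}$ must be controlled to matching precision; a loose estimate on either side would let the tails cancel the principal signal. Getting the precise $(\log n)^2 e^{-2r}$ factor — as opposed to $\log n \cdot e^{-2r}$ or $r e^{-2r}$ — requires a careful pairing of the derivative bound $|g'(2r)| = (2r-1)e^{-2r}$ with the separation lower bound $\delta > 2\log n + 3$, so that their product recovers the squared logarithm. I expect this constant-tracking, together with the iteration step when multiple sorted positions coincide, to be the technically delicate heart of the proof.
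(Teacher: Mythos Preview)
Your magnitude bound and Case~1 are fine and parallel the paper's treatment. The problem is in Case~2. You lower-bound the principal term $g(d^{(i)}_{(2)}) - g(d^{(j)}_{(2)})$ via the mean value theorem using the \emph{global} minimum of $|g'|$ on $[\delta,2r]$, yielding roughly $(2r-1)\delta\, e^{-2r}$, while you bound the tail by $(n-2)g(2\delta) \sim n\delta\, e^{-2\delta}$. These two estimates do not mesh: nothing ties $r$ to $\delta$ beyond $2r \ge (n-1)\delta$, and whenever $r$ is appreciably larger than $\delta$ the tail bound $n\delta e^{-2\delta}$ is \emph{exponentially larger} than your principal bound $(2r-1)\delta e^{-2r}$. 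Concretely, take $n=3$, $\delta$ just above $2\log 3 + 3$, and $r = 2\delta$: your tail estimate beats your MVT estimate by a factor of order $e^{2\delta}$. So the claim that the tails are ``exponentially smaller than the principal term'' is false under the bounds you actually wrote down, and the argument does not close.

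The repair is to anchor both estimates at the actual smaller principal gap $D := \min(d^{(i)}_{(2)}, d^{(j)}_{(2)})$ rather than at the interval endpoints. Directly, $g(D) - g(D+\delta) \ge g(D)(1 - 2e^{-\delta})$ (using $\delta \le D$), and every tail gap on either side exceeds $D+\delta$, so the tail is at most $(n-2)g(D+\delta) \le 2(n-2)e^{-\delta}g(D)$; since $e^{-\delta} < e^{-3}/n^2$ this is negligible against $g(D)$. The Boltzmann gap is then $\gtrsim g(D) \ge 2r e^{-2r}$, and $2r \ge (n-1)\delta > (\log n)^2$ finishes. The paper sidesteps this whole tail-versus-principal bookkeeping: it first proves coordinatewise monotonicity and concavity of $\boltz$ (under the gap hypothesis), uses monotonicity to sandwich $\boltz(\va)$ and $\boltz(\vb)$ between the Boltzmann values of a \emph{truncated} vector $(a_1,\dots,a_{k+1})$ and a \emph{padded} vector $(a_1,\dots,a_k,b_{k+1},\dots,b_{k+1})$, collapses the repeated coordinates via $\boltz(a_1,\dots,a_k,b,\dots,b) = \boltz(a_1,\dots,a_k,\, b+\log(n-k)) - (\text{small correction})$, and then compares two length-$(k+1)$ vectors differing in a single coordinate by a concavity-based first-order bound. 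That route never needs to balance a principal term against a tail.
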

Taking into account the above arguments, this separability of the Boltzmann operator allows us to construct one self-attention layer to be a contextual mapping.
\end{proof}

\begin{rem}[Masked self-attention]
    In practice, attention matrices are often masked to avoid directing attention to undesired tokens.
    This is performed, for example, for autoregressive text generation or padding of inputs with different lengths.
    It is relatively straightforward to extend \cref{thm:softmax_is_contextual_mapping} to masked self-attention mechanisms.
    See Appendix \ref{sec:masked_self_attention} for more details.
\end{rem}

\section{Applications of Contextual Mapping}
\subsection{Memorization capacity of one-layer Transformer}
As a first application of \cref{thm:softmax_is_contextual_mapping}, we prove that a 1-layer Transformer can completely memorize finite samples, each of which has no duplicate token.
This result emphasizes that in contrast to the proof of \citet{kim_provable_2023}, which requires $2n$ self-attention layers for Transformer memorization, one layer of self-attention is actually sufficient.
In addition, it is worth noting that the hardmax-based Transformers do not have a memorization capacity, which is implied straightforwardly from \cref{thm:hardmax_is_not_contextual_mapping}.

\begin{cor}[Memorization capacity of one-layer Transformer]\label{cor:memorization_of_one_layer_Transformer}
    Let $\epsilon > 0, r_{\max} > r_{\min} > 0$ and $(\mX^{(1)}, \mY^{(1)}),\dots, (\mX^{(N)}, \mY^{(N)}) \subset \R^{d \times n} \times [C]^{d \times n}$ be sequences of input-output-pairs such that $\mX^{(1)},\dots,\mX^{(N)}$ are tokenwise $(r_{\min},r_{\max},\epsilon)$-separated input sequences with no duplicate token in each sentence and consistently labeled, that is, $
        \mY^{(i)}_{:,k} = \mY^{(j)}_{:,l}
    $
    holds for any $i,j \in [N]$ and $k,l \in [n]$ such that $\mathcal{V}^{(i)} = \mathcal{V}^{(j)}$ and $\mX^{(i)}_{:,k} = \mX^{(j)}_{:,l}$.
    
    Then, there exist $4(s + d) + d(2nN + d)$ weight parameters such that for any $i \in [N]$
    \begin{align*}
        \mathcal{F}\left(\mX^{(i)} \right)
    = \mathcal{F}^{(FF)}\left(
    \mathcal{F}^{(SA)}_S\left(\mX^{(i)}\right)
    \right)
        = \mY^{(i)} \numberthis
    \end{align*}
    holds.
\end{cor}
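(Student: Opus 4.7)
The plan is to compose \cref{thm:softmax_is_contextual_mapping} with a standard feed-forward memorization argument, exploiting the fact that once the self-attention layer produces well-separated context ids, the labels depend only on those ids. First, I invoke \cref{thm:softmax_is_contextual_mapping} to obtain rank-$1$ weight matrices $\mW^{(V)}, \mW^{(K)}, \mW^{(Q)}$ and an output matrix $\mW^{(O)}$ (also realized with $s+d$ parameters) such that $\mathcal{F}^{(SA)}_S$ is an $(r,\delta)$-contextual mapping for the tokenwise $(r_{\min},r_{\max},\epsilon)$-separated inputs $\mX^{(1)},\dots,\mX^{(N)}$. Set $\mH^{(i)} := \mathcal{F}^{(SA)}_S(\mX^{(i)})$; then $\|\mH^{(i)}_{:,k}\| \leq r$ for all $i,k$, and $\|\mH^{(i)}_{:,k} - \mH^{(j)}_{:,l}\| > \delta$ whenever $\mathcal{V}^{(i)} \neq \mathcal{V}^{(j)}$ or $\mX^{(i)}_{:,k} \neq \mX^{(j)}_{:,l}$.

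The label-consistency hypothesis is exactly what is needed to make the correspondence $\mH^{(i)}_{:,k} \mapsto \mY^{(i)}_{:,k}$ a well-defined function on the set $\mathcal{C}$ of distinct context ids: two column indices that are \emph{not} separated by $\delta$ must share the same vocabulary and same token, hence the same label. Write $M := |\mathcal{C}| \leq nN$ and list $\mathcal{C} = \{\vh_1,\dots,\vh_M\}$ with target values $\vy_1,\dots,\vy_M \in [C]^d$. The next step is to pick a direction $\vw \in \R^d$ such that the scalars $\alpha_m := \vw^\top \vh_m$ are pairwise distinct; the set of bad $\vw$ is a finite union of hyperplanes, so almost every $\vw$ works and I may additionally order $\alpha_1 < \cdots < \alpha_M$.

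I then build the feed-forward block so that its residual part implements the lookup $\vh_m \mapsto \vy_m - \vh_m$, cancelling the skip connection of $\mathcal{F}^{(FF)}$. For each $m \in [M]$ I use a pair of ReLU hidden units with weight row $\vw^\top$ and biases tuned so that their difference is a narrow triangular bump that equals $1$ at the scalar $\alpha_m$ and $0$ at every other $\alpha_{m'}$; since the $\alpha_m$ are finitely many and strictly ordered, choosing each bump width below $\min_{m'\neq m}|\alpha_m - \alpha_{m'}|$ makes this exact on the training set. Stacking these $2M$ units yields $\mW^{(1)} \in \R^{2M \times d}$ and $\vb^{(1)} \in \R^{2M}$; the output layer $\mW^{(2)} \in \R^{d \times 2M}$ stores the target residuals $\vy_m - \vh_m$ in the appropriate column pair, and I set $\vb^{(2)}=\vzero$. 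Then $\mathcal{F}^{(FF)}(\mH^{(i)})_{:,k} = \mH^{(i)}_{:,k} + (\vy_{m(i,k)} - \vh_{m(i,k)}) = \mY^{(i)}_{:,k}$, and composing with $\mathcal{F}^{(SA)}_S$ gives the required identity $\mathcal{F}(\mX^{(i)}) = \mY^{(i)}$. The parameter budget is $4(s+d)$ for the four rank-$1$ attention matrices and at most $d \cdot 2M + 2M + d \cdot 2M + d \leq d(2nN+d)$ for the feed-forward weights and biases after absorbing lower-order terms.

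The main obstacle is ensuring that the hidden-layer bumps are truly supported away from all \emph{other} context ids despite the fact that, after projection by $\vw$, nearby $\vh_m$'s may map to close scalars $\alpha_m$. This is handled entirely by the $(r,\delta)$-contextual property: the $\vh_m$ are $\delta$-separated in $\R^d$, so a generic $\vw$ of unit norm produces a positive minimum gap $\min_{m\neq m'}|\alpha_m - \alpha_{m'}|$, and the bump widths can be chosen strictly below this gap. Everything else — well-definedness via label consistency, the residual cancellation with the skip connection, and the columnwise action of $\mathcal{F}^{(FF)}$ — is routine bookkeeping once this separation is secured.
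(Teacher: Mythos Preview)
Your approach is essentially the paper's: invoke \cref{thm:softmax_is_contextual_mapping} to obtain the $(r,\delta)$-contextual mapping, use label consistency to see that the assignment $\vh_m \mapsto \vy_m$ is well-defined on the at most $nN$ distinct context ids, and then memorize this finite table with a one-hidden-layer ReLU block. The paper does not spell out the feed-forward construction at all and simply cites \citet{zhang_understanding_2016} for both the memorization and the parameter count $d(2nN+d)$; you give an explicit projection-then-indicator construction and, unlike the paper, explicitly handle the skip connection by targeting the residuals $\vy_m - \vh_m$.

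There is one technical slip in your feed-forward construction. Two ReLU units sharing the \emph{same} weight row $\vw^\top$ produce a ramp, not a triangular bump: once the larger threshold is passed the difference is constant, not zero, so your ``indicator'' for $\alpha_m$ does not vanish at the $\alpha_{m'}$ lying to its right. A genuine tent needs three ReLUs (or two with opposite-sign weight rows plus a third to clip). This also explains why your parameter count does not close: $4dM + 2M + d$ with $M\le nN$ has leading term $4dnN$, which cannot be ``absorbed'' into the target $d(2nN+d)=2dnN+d^2$. Both issues disappear if you swap the bump construction for the standard triangular-system memorization of \citet{zhang_understanding_2016} that the paper invokes: take $q=M$ hidden units with common weight row $\vw^\top$ and thresholds just below the sorted $\alpha_m$; the activation matrix on the training scalars is lower-triangular with nonzero diagonal, so $\mW^{(2)}$ (and hence the residual $\vy_m-\vh_m$) is obtained by back-substitution. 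This uses $M$ rather than $2M$ hidden units and reproduces exactly the parameter accounting the paper cites.
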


\begin{rem}[Parameter efficiency]
To achieve the memorization with a one-layer Transformer, the one-hidden-layer feed-forward block has to map each context id to the corresponding label.
Since the possible number of context ids is at most $nN$ in the worst case, the linear dependency on $nN$ of the number of parameters in \cref{cor:memorization_of_one_layer_Transformer} is optimal up to logarithmic factors \citep{bartlett_nearly_2019}.
It is worth mentioning that this linear dependency can be relaxed to the milder requirement $\tilde{O}\left(\sqrt{nN}\right)$ by allowing for deeper layers in the feed-forward block \citep{vardi_optimal_2022}, under the assumption that the size $|\mathcal{V}|$ of the vocabulary set is independent of $n$ and $N$.
\end{rem}

In addition, it is straightforward to show that a $1$-layer Transformer with trainable positional encodings has a memorization capacity for arbitrary input sequences possibly with duplicate tokens.

\begin{cor}[Memorization capacity of one-layer Transformer with positional encodings]\label{cor:memorization_of_one_layer_Transformer_with_positional_encodings}
    Let $\epsilon > 0, r_{\max} > r_{\min} > 0$ and $(\mX^{(1)}, \mY^{(1)}),\dots, (\mX^{(N)}, \mY^{(N)}) \subset \R^{d \times n} \times [C]^{d \times n}$ be sequences of input-output-pairs such that $\mX^{(1)},\dots,\mX^{(N)}$ are tokenwise $(r_{\min},r_{\max},\epsilon)$-separated input sequences and are consistently labeled, that is, $
        \mY^{(i)} = \mY^{(j)}
    $
    holds for any $i,j \in [N]$ such that $\mX^{(i)} = \mX^{(j)}$.
    
    Then, there exist $4(s + d) + d(2nN + d)$ weight parameters and positional encodings $\mE \in \R^{d \times n}$ such that for any $i \in [N]$,
    \begin{align*}
        \mathcal{F}\left(\mX^{(i)} + \mE\right)
    = \mathcal{F}^{(FF)}\left(
    \mathcal{F}^{(SA)}_S\left(\mX^{(i)} + \mE\right)
    \right)
        = \mY^{(i)} \numberthis
    \end{align*}
    holds.
\end{cor}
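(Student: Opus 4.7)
The plan is to reduce Corollary 2 to Corollary 1 by introducing a positional encoding $\mE \in \R^{d \times n}$ whose columns are small in norm but pairwise distinct, so that the perturbed sequences $\tilde{\mX}^{(i)} := \mX^{(i)} + \mE$ have no duplicate tokens within each sequence, while still satisfying all the hypotheses of Corollary 1. The required weight parameters can then be obtained by a direct application of Corollary 1.

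First I would choose $\mE$ by setting $\mE_{:,k} = \eta \cdot k \cdot \vu$ for $k \in [n]$, where $\vu \in \R^d$ is a fixed unit vector and $\eta > 0$ is small enough that the positional offsets are pairwise distinct with $\|\mE_{:,k} - \mE_{:,l}\| \le \eta(n-1) < \epsilon/4$ for all $k,l \in [n]$, and also $\eta n < r_{\min}$. Then for any $i,j \in [N]$ and $k,l \in [n]$ with $\tilde{\mX}^{(i)}_{:,k} = \tilde{\mX}^{(j)}_{:,l}$, rearranging gives $\mX^{(i)}_{:,k} - \mX^{(j)}_{:,l} = \mE_{:,l} - \mE_{:,k}$, whose norm is strictly less than $\epsilon$. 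By the tokenwise $(r_{\min},r_{\max},\epsilon)$-separatedness of the original sequences, this forces $\mX^{(i)}_{:,k} = \mX^{(j)}_{:,l}$, hence $\mE_{:,k} = \mE_{:,l}$, hence $k = l$. In particular, no duplicates occur within any single $\tilde{\mX}^{(i)}$, and a reverse-triangle-inequality argument shows that the perturbed sequences are tokenwise $(r_{\min} - \epsilon/4,\, r_{\max} + \epsilon/4,\, \epsilon')$-separated for some $\epsilon' > 0$ depending on $\eta$.

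Next I would transfer the consistency-of-labels condition. Suppose $\tilde{\mathcal{V}}^{(i)} = \tilde{\mathcal{V}}^{(j)}$ and $\tilde{\mX}^{(i)}_{:,k} = \tilde{\mX}^{(j)}_{:,l}$ for some $k,l$. The previous argument gives $k = l$ and $\mX^{(i)}_{:,k} = \mX^{(j)}_{:,k}$. Repeating this for every position (each token in $\tilde{\mathcal{V}}^{(i)}$ having a unique position index) yields $\mX^{(i)} = \mX^{(j)}$ as whole sequences, and the hypothesis of Corollary 2 then gives $\mY^{(i)} = \mY^{(j)}$, hence $\mY^{(i)}_{:,k} = \mY^{(j)}_{:,l}$, which is exactly the token-level consistency condition required by Corollary 1. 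Applying Corollary 1 to the perturbed sequences $\tilde{\mX}^{(i)}$ and the labels $\mY^{(i)}$ yields the desired $4(s+d) + d(2nN+d)$ weight parameters.

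The main obstacle in this plan is ensuring that the sequence-level consistency hypothesis of Corollary 2 indeed implies the stronger token-level consistency condition required by Corollary 1 after perturbation by $\mE$. The key is to exploit the smallness of $\mE$ relative to $\epsilon$: it forces any coincidence $\tilde{\mX}^{(i)}_{:,k} = \tilde{\mX}^{(j)}_{:,l}$ to occur only when $k = l$, which lets us reconstruct the entire sequence $\mX^{(i)}$ (and thus its label $\mY^{(i)}$) from the vocabulary $\tilde{\mathcal{V}}^{(i)}$ of the perturbed sequence.
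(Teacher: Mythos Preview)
Your reduction to Corollary~1 is correct, and the argument is clean. The main difference from the paper is the \emph{scale} of the positional encoding. The paper takes the columns of $\mE$ to be large, namely $\mE_{:,k} = 2k\,r_{\max}\,\vone$, so that tokens at different positions are pushed into disjoint norm ranges; this yields perturbed sequences that are tokenwise $(r_{\max},(2n+1)r_{\max},\epsilon)$-separated with the original $\epsilon$ preserved, and then Theorem~2 is invoked directly (the vocabulary size becomes at most $nN$). You instead take $\mE$ to be small ($\eta(n-1)<\epsilon/4$), which keeps $r_{\max}$ essentially unchanged but shrinks the tokenwise separation down to order $\eta$. Both choices make tokens at different positions within a sequence distinct and both reduce to Corollary~1; your route trades a worse separation constant for a tighter norm bound, while the paper does the opposite. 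Since the final parameter count $4(s+d)+d(2nN+d)$ does not depend on either constant, the two approaches are equivalent for the purposes of this corollary.

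One minor cleanup: your stated lower norm bound ``$r_{\min}-\epsilon/4$'' should really be $r_{\min}-\eta n$ (which is positive by your assumption $\eta n < r_{\min}$); the two quantities need not coincide. Also, your verification of the consistency-of-labels condition is more explicit than the paper's, which simply says ``we do the same thing as in the proof of Corollary~1'' without spelling out why $\tilde{\mathcal V}^{(i)}=\tilde{\mathcal V}^{(j)}$ forces $\mX^{(i)}=\mX^{(j)}$; your observation that coincidences $\tilde{\mX}^{(i)}_{:,k}=\tilde{\mX}^{(j)}_{:,l}$ can only occur at $k=l$ makes this step transparent.
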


\subsection{Transformers with one self-attention layer are universal approximators}
As a further application of \cref{thm:softmax_is_contextual_mapping}
we here provide a proof that Transformer with one self-attention layer is a universal approximator.
More precisely, let $\mathcal{F}_{\mathrm{PE}}$ be the set of all permutation equivariant continuous functions that take values on a compact domain in $\R^{d \times n}$, and let $\mathcal{T}_2$ be the set of all two layer Transformers with one-layer and single-head self-attention, that is,
\begin{align*}
    \mathcal{T}_2
    = \left\{
    \mathcal{F}^{(FF)}_2
    \circ
    \mathcal{F}^{(SA)}_S
    \circ
    \mathcal{F}^{(FF)}_1
    : \R^{n \times d} \to \R^{n \times d}
    \right\}, \numberthis
\end{align*}
where $\mathcal{F}^{(FF)}_1,\mathcal{F}^{(FF)}_2$ and $\mathcal{F}^{(SA)}_S$ are feed-forward neural network layers and a single-head self-attention layer with the softmax function, respectively.
Then the following proposition holds (see the definition (\ref{eq:dist_of_f})).
\begin{pro}[Transformers with one layer self-attention are universal approximators]\label{pro:Transformers_are_universal_approximator}
Let $1 \leq p < \infty$. Then, for any $f \in \mathcal{F}_{\mathrm{PE}}$ and $\epsilon > 0$, there exists a Transformer $g \in \mathcal{T}_2$  with one-layer and single-head self-attention such that
$
    \dist_p(f,g)
    < \epsilon.
$
holds.
\end{pro}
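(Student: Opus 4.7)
The plan is to follow the quantize-then-memorize strategy of \citet{yun_are_2023}, replacing their deep stack of hardmax self-attention layers with the single softmax self-attention layer guaranteed by \cref{thm:softmax_is_contextual_mapping} and \cref{cor:memorization_of_one_layer_Transformer}. First, I would fix the compact domain $K \subset \R^{d \times n}$ on which $f$ is defined, tile it by a uniform $\delta$-grid, and approximate $f$ by a piecewise constant function $\bar{f}$ that equals $f$ at the center of each cell. Because the grid is invariant under column permutations and cell centers transform equivariantly, $\bar{f}$ may be chosen permutation equivariant; by uniform continuity of $f$ on $K$, we obtain $\dist_p(f,\bar{f}) < \epsilon/2$ for all sufficiently small $\delta$.

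Second, I would realize $\bar{f}$ (up to a controlled $L^p$ remainder) by a Transformer $g = \mathcal{F}^{(FF)}_2 \circ \mathcal{F}^{(SA)}_S \circ \mathcal{F}^{(FF)}_1 \in \mathcal{T}_2$. The first feed-forward layer $\mathcal{F}^{(FF)}_1$ is taken to be a token-wise quantizer implemented by a standard staircase construction: a sum of shifted ReLU units, combined with the skip connection, maps each column of $\mX$ to the center of the cell containing it. Denote by $\{\tilde{\mX}^{(i)}\}_{i \in [N]}$ the finite collection of resulting quantized sequences with pairwise distinct columns. These are tokenwise $(r_{\min},r_{\max},\epsilon')$-separated with $\epsilon' = \Theta(\delta)$, and the labels $\bar{f}(\tilde{\mX}^{(i)})$ satisfy the consistency condition of \cref{cor:memorization_of_one_layer_Transformer}: if $\tilde{\mX}^{(i)}_{:,k} = \tilde{\mX}^{(j)}_{:,l}$ and $\mathcal{V}^{(i)} = \mathcal{V}^{(j)}$, then under distinct columns $\tilde{\mX}^{(i)}$ and $\tilde{\mX}^{(j)}$ are column-permutations of one another aligning the common token at positions $k$ and $l$, and permutation equivariance of $\bar{f}$ forces $\bar{f}(\tilde{\mX}^{(i)})_{:,k} = \bar{f}(\tilde{\mX}^{(j)})_{:,l}$. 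Hence \cref{cor:memorization_of_one_layer_Transformer} supplies parameters for $\mathcal{F}^{(SA)}_S$ and $\mathcal{F}^{(FF)}_2$ that map every $\tilde{\mX}^{(i)}$ to $\bar{f}(\tilde{\mX}^{(i)})$ exactly.

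The principal obstacle is the set $K_{\mathrm{dup}} \subset K$ of inputs whose quantizations contain two identical columns, which falls outside the hypotheses of \cref{cor:memorization_of_one_layer_Transformer}. However, $K_{\mathrm{dup}}$ lies in an $O(\delta)$-neighborhood of the finite union of diagonal subspaces $\{\mX : \mX_{:,k} = \mX_{:,l}\}$ for $k \neq l$, each of codimension $d$ in $\R^{d \times n}$; so its Lebesgue measure is $O(\delta^d)$, and since both $f$ and $g$ remain uniformly bounded on $K$, the contribution of $K_{\mathrm{dup}}$ to $\dist_p(f,g)$ is $O(\delta^{d/p})$. Choosing $\delta$ small enough that this contribution together with the $\epsilon/2$ quantization error is below $\epsilon$ finishes the proof. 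A cleaner alternative I would pursue in the write-up is to verify that the proof of \cref{thm:softmax_is_contextual_mapping} adapts to tolerate within-sequence token repetitions whenever the corresponding target labels agree --- automatic here from permutation equivariance of $\bar{f}$ --- which would eliminate the need to carve out $K_{\mathrm{dup}}$ entirely.
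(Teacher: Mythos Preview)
Your plan is essentially the paper's own: quantize with $\mathcal{F}^{(FF)}_1$, invoke the single-head contextual mapping of \cref{thm:softmax_is_contextual_mapping} on the grid points with pairwise distinct columns, memorize the labels with $\mathcal{F}^{(FF)}_2$, and absorb the duplicate-column region by a Lebesgue-measure argument. The paper carries out exactly these steps (splitting the error into thirds rather than halves, and additionally inserting a penalty term in $\mathcal{F}^{(FF)}_1$ so that inputs outside the normalized cube are driven to a region that $\mathcal{F}^{(FF)}_2$ maps to zero).

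One point does need tightening. You write ``since both $f$ and $g$ remain uniformly bounded on $K$, the contribution of $K_{\mathrm{dup}}$ \dots\ is $O(\delta^{d/p})$,'' but $g$ depends on $\delta$: as $\delta\to 0$ the number of grid points to memorize grows like $\delta^{-dn}$, and a black-box call to \cref{cor:memorization_of_one_layer_Transformer} (or to the \citet{zhang_understanding_2016} construction underneath it) gives no $\delta$-independent bound on $\sup_K |g|$. If that supremum blows up faster than $\delta^{-d/p}$, your measure argument fails. The paper handles this explicitly by building $\mathcal{F}^{(FF)}_2$ from bump functions whose outputs are clamped to the range of $f$ (so $0\le \mathcal{F}^{(FF)}_2 \le 1$ everywhere); you should either verify an analogous bound for your chosen memorization network or append a ReLU clipping stage. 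A smaller cosmetic issue: \cref{cor:memorization_of_one_layer_Transformer} is stated for integer labels in $[C]^{d\times n}$, so strictly speaking you should cite the underlying FF memorization result for real-valued targets rather than the corollary itself. Your closing ``cleaner alternative'' is not pursued in the paper and would require nontrivial additional work, since the Boltzmann-operator separation in \cref{lem: boltzmann_separation} genuinely uses the no-duplicate hypothesis.
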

To the best of our knowledge, this is the first universal approximation theorem for two-layer Transformers with a self-attention of realistic size. 
\citet{takakura_approximation_2023} showed that a one-layer Transformer with an embedding layer is capable of approximating shift-equivariant $\gamma$-smooth functions.
However, their construction requires a considerably high number of self-attention heads and a large head size to flatten an input sequence into outputs of self-attention.
\citet{jiang_approximation_2023} used the Kolmogorov representation theorem to give a non-constructive proof of the universal approximation theorem for two-layer Transformers.
They make a particular assumption on the domain of functions, which in turn implies the first universal approximation theorem of two-layer Transformer with positional encoding for continuous functions on a compact domain.
Nevertheless, they again require a very high hidden dimension $4n^2d + 2n$.
In contrast, thanks to \cref{thm:softmax_is_contextual_mapping}, we have shown that Transformers using a single-head self-attention are universal approximators for continuous permutation equivariant functions on an arbitrary compact domain.
Our result can be readily extended for continuous but not necessarily permutation equivariant functions on a compact domain by using positional encoding, and at the same time is significant from the perspective of geometric deep learning.

\section{Experiments}
As shown in \cref{thm:softmax_is_contextual_mapping}, a self-attention mechanism with rank $1$ weight matrices already has enough expressive capacity to become a contextual mapping.
In particular, its proof leads us to consider the following simplified form of a self-attention mechanism: for any input sequence $\mZ \in \R^{d \times n}$,
\begin{align}
    \mathcal{F}^{(R1)}_S(\mZ)
    = \mZ + \mW^{(O)}
    \left(\vv_{1}^\top\mZ \right)
    \sigma_S \left[
    \left(\vv_{1}^\top\mZ \right)^\top
    \left(\vv_{2}^\top\mZ \right)
    \right]
    \in \R^{d \times n}, \label{eq:formulation_of_rank1_attention}
\end{align}
where $\vv_1,\vv_2 \in \R^d$ and $\mW^{(O)} \in \R^{d \times 1}$ are weight matrices.
This architecture corresponds to a common self-attention with the head size $s = 1$, and value and query matrices having the same weight vector $\vv_1$.
In this section, we test whether Transformers with self-attention layers replaced by \eqref{eq:formulation_of_rank1_attention}, which we call rank-$1$ Transformers, actually have the theoretically predicted expressive capacity by using a real-world dataset.

We train rank-$1$ Transformers on a token classification task with the CoNLL-2003 \citep{tjong_kim_sang_introduction_2003} dataset.
The batch size is $32$ and the training are conducted over $400$ epochs.

We train three different depths of rank-1 transformers on the dataset and do not use layer normalization to match the situation with our theoretical analysis.

\begin{figure}[h]
  \centering
  \begin{minipage}[b]{0.7\linewidth}
    \centering
    \includegraphics[keepaspectratio, width=\linewidth]{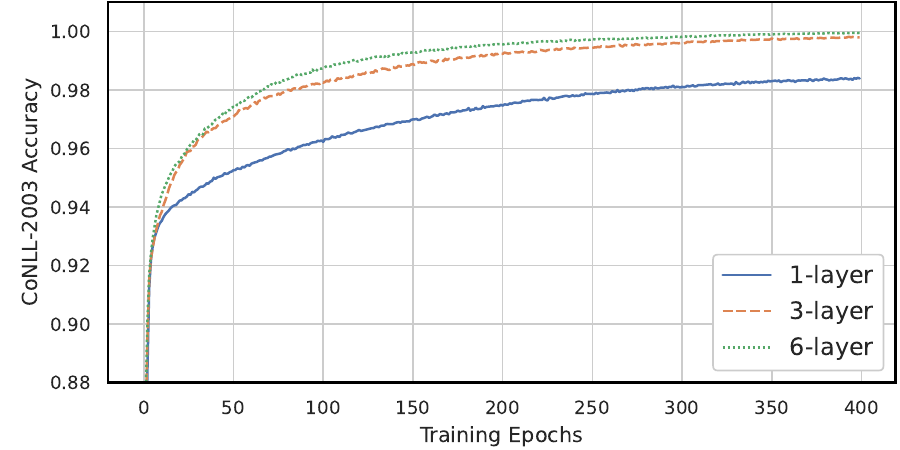}
    \subcaption{CoNLL-2003}
  \end{minipage}
  \caption{Training accuracy of rank-$1$ Transformers for the CoNLL-2003 dataset.
  $1$-layer (solid line), $3$-layer (dashed line) and $6$-layer (dotted line) rank-$1$ Transformers are trained over $400$ epochs (X-axis).
  For the $1$-layer rank-$1$ Transformer, we observed that it reached an accuracy of $0.9872$ at $800$ epochs after further training.}\label{fig:accuracy}
\end{figure}

\Figref{fig:accuracy} shows training accuracies of $1$-layer, $3$-layer and $6$-layer rank-$1$ Transformers on each task over $400$ epochs.
It can be seen that the $1$-layer rank-$1$ Transformer is already able to memorise the CoNLL-2003 dataset almost perfectly.
On the other hand, while the accuracy curve for the $1$-layer rank-$1$ Transformer shows that the accuracy is still increasing steadily at $400$ epochs, reaching $0.9872$ at $800$ epochs, its rate of increase is much slower than for the $3$-layer and $6$-layer Transformers.

From this observation, we conjecture that  while theoretically $1$-layer Transformers already have a memorisation capacity for finite samples, the advantage of deepening layers lies in speeding up the learning of such tasks.
Since our analysis is on the expressive capabilities of Transformers, we leave this hypothesis on the optimisation aspect of Transformers as a future work.

\section{Conclusions}
We demonstrated that a contextual mapping can be implemented in one-layer and single-head self-attention with low-rank matrices, by clarifying the connection between a self-attention mechanism and the Boltzmann operator. This particularly indicates that one-layer Transformers have a memorization capacity for finite samples, and that Transformers with one-layer and single-head self-attention are universal approximators for continuous permutation equivariant functions on a compact domain.
Our proof of the universal approximation theorem requires one feed-forward neural network layer before the self-attention layer to quantize continuous inputs.
We leave it as future work to clarify whether the one-layer Transformers without such a quantization layer are universal approximators or not.
We also expect that our analysis of the softmax function will have an impact on the evaluation of Transformer's expressive capability from the perspective of formal languages.

\clearpage

\bibliography{iclr2024_conference,zotero_2023_9_28}

\bibliographystyle{iclr2024_conference}

\appendix
\renewcommand{\nomname}{Notation Table}
\nomenclature[A, 03]{\(\mA\)}{A matrix}
\nomenclature[A, 02]{\(\va\)}{A vector}
\nomenclature[A, 01]{\(a\)}{A scalar}
\nomenclature[A, 08]{\(\mX^{(i)}\)}{$i$-th input sequence, consisting of $n$ tokens of embedding dimension $d$}
\nomenclature[A, 04]{$n$}{The length of an input sequence}
\nomenclature[A, 05]{$N$}{The number of input sequences}
\nomenclature[A, 06]{$C$}{The number of output classes}
\nomenclature[A, 07]{$d$}{Embedding dimension}

\nomenclature[B, 04]{\([a,b]\)}{Closed interval from $a$ to $b$}
\nomenclature[B, 03]{\([m]\)}{Set of all integers from $1$ to $m$}
\nomenclature[B, 01]{\(\R\)}{Set of real numbers}
\nomenclature[B, 02]{\(\mathbb{N}_+\)}{Set of positive integers}
\nomenclature[B, 05]{\(\mathcal{V}^{(i)}\)}{$i$-th vocabulary set}

\nomenclature[C, 02]{\(A_{i,j}\)}{Element $i, j$ of matrix $\mA$}
\nomenclature[C, 01]{\(a_i\)}{Element $i$ of vector $\va$, with indexing starting at $1$}
\nomenclature[C, 03]{\(\mA_{i, :}\)}{Row $i$ of matrix $\mA$}
\nomenclature[C, 03]{\(\mA_{:, i}\)}{Column $i$ of matrix $\mA$}

\nomenclature[F, 02]{$\lVert\vx\rVert$}{$\ell^2$ norm of $\vx$}
\nomenclature[F, 03]{$\lVert\vx\rVert_p$}{$\ell^p$ norm of $\vx$}
\nomenclature[F, 04]{\(\1_\mathrm{condition}\)}{is 1 if the condition is true, 0 otherwise}
\nomenclature[F, 05]{$\dist_p(f_1,f_2)$}{$\left(\int\left\lVert f_1(\mathbf{X})-f_2(\mathbf{X})\right\rVert_p^p \mathrm{d} \mathbf{X}\right)^{1 / p}$}
\nomenclature[F, 06]{$\sigma_S$}{Softmax function}
\nomenclature[F, 07]{$\sigma_H$}{Hardmax function}
\nomenclature[F, 08]{$\sigma_R$}{ReLU activation function}
\nomenclature[F, 09]{$\mathcal{F}^{(SA)}_H$}{Hardmax-based self-attention mechanism with a skip-connection}
\nomenclature[F, 10]{$\mathcal{F}^{(SA)}_S$}{Softmax-based self-attention mechanism with a skip-connection}
\nomenclature[F, 11]{$\mathcal{F}^{(FF)}$}{Feed-forward neural network with a skip-connection}
\nomenclature[F, 12]{$\boltz$}{Boltzmann opeartor}

\printnomenclature

\section{Proof of Main Results}

First, we introduce the Boltzmann operator, which frequently appears in our proofs.
\begin{dfn}(Boltzmann operator)
The Boltzmann operator is defined by
\begin{align*}
    \boltz: \R^m \to \R,\,
    \va \mapsto \va^\top \sigma_S[\va]. \numberthis
\end{align*}
By abuse of notation, we use the same notation $\boltz$ for various dimension $m \in \mathbb{N}_+$.
\end{dfn}

\subsection{\texorpdfstring{Proof of \cref{thm:hardmax_is_not_contextual_mapping}}{}}\label{sec: proof_of_hardmax}
\begin{proof}
Let $\vv \in \R^d$ be an arbitrary nonzero vector, and consider the situation that all input tokens can be written as
\begin{equation}
    \mathcal{V} = \{\alpha_1 \vv, \alpha_2 \vv, \alpha_3 \vv, \alpha_4 \vv\} \subset \R^d
\end{equation}
for some scalars $\alpha_1 < \alpha_2 < \alpha_3 < \alpha_4$. 
Then, the attention matrix inside the hardmax function at head $i$ can be expressed as
\begin{align}
    \left(\mW_{i}^{(K)}\vv \va^\top \right)^\top
    \left(\mW_{i}^{(Q)}\vv \va^\top \right)
    = \va \left(\mW_{i}^{(K)}\vv \right)^\top
    \left(\mW_{i}^{(Q)}\vv \right)
    \va^\top 
\end{align}
with input coefficients $\va \in \{\alpha_1,\alpha_2,\alpha_3,\alpha_4\}^n \subset \R^n$.
In particular, when we focus on a certain index, at which the token is, e.g., $\alpha_2 \vv$, the above expression can further be written as
\begin{align*}
    \left(\mW_{i}^{(K)}\vv \va^\top \right)^\top
    \left(\mW_{i}^{(Q)}\alpha_2\vv \right)
    &= \va \left(\mW_{i}^{(K)}\vv \right)^\top
    \left(\mW_{i}^{(Q)}\vv \right)
    \alpha_2 \\
    &= \underbrace{\left(\mW_{i}^{(K)}\vv \right)^\top
    \left(\mW_{i}^{(Q)}\vv \right)
    \alpha_2}_{\in \R} \cdot \va \numberthis
\end{align*}
The right-hand side is a vector $\va$ multiplied by some scalar. So it is evident that the maximum value of the vector on the right-hand side is achieved only at the indices where the values of the input sequence $\vv \va^\top$ are $\alpha_1$ or $\alpha_4$.
This implies that a self-attention with the hardmax function invariably gets distracted by the indices where $\alpha_1$ or $\alpha_4$ are present, thereby overlooking information from other tokens in the input sequence.
As a result, no matter how many heads there are, one-layer self-attention with the hardmax function cannot distinguish input sequences, e.g., $(\alpha_1 \vv, \alpha_2 \vv, \alpha_4 \vv)$ and $(\alpha_1 \vv, \alpha_3 \vv, \alpha_4 \vv)$.
\end{proof}

\subsection{\texorpdfstring{Proof of \cref{thm:softmax_is_contextual_mapping}}{}} \label{sec: proof_of_contextual_mapping}
\begin{proof}[Proof of \cref{thm:softmax_is_contextual_mapping}]
    Recall that a softmax-based self-attention function $\mathcal{F}^{(SA)}_S:\R^{d \times n} \to \R^{d \times n}$ with $h = 1$ is defined as
    \begin{align}
        \mathcal{F}^{(SA)}_{S}\left(\mZ\right)
        = \mZ + \mW^{(O)}\left(\mW^{(V)}\mZ\right)
        \sigma_S
        \left[
        \left(\mW^{(K)}\mZ\right)^\top
        \left(\mW^{(Q)}\mZ\right)
        \right],
    \end{align}
    where $\mW^{(O)} \in \R^{d \times s}$ and $\mW^{(V)},\mW^{(K)},\mW^{(Q)} \in R^{s \times d}$ are weight matrices.
    
    We construct a softmax-based self-attention function $\mathcal{F}^{(SA)}_S$ with the property that
    \begin{align}
        \left\|
        \mW^{(O)}\left(\mW^{(V)}\mX^{(i)}\right)
        \sigma_S
        \left[
        \left(\mW^{(K)}\mX^{(i)}\right)^\top
        \left(\mW^{(Q)}\mX^{(i)}_{:,k}\right)
        \right]
        \right\| < \frac{\epsilon}{4}
    \end{align}
    holds for any input sequence $\mX^{(i)}$ with $i \in [N]$ and index $k \in [n]$.
    When this property is fulfilled, it is easy to show that
    \begin{align}
        \left\|\mathcal{F}^{(SA)}_{S}\left(\mX^{(i)}\right)_{:,k}\right\|
        &\leq
        \left\|\mX^{(i)}_{:,k}\right\|
        + \left\|
        \mW^{(O)}\left(\mW^{(V)}\mX^{(i)}\right)
        \sigma_S
        \left[
        \left(\mW^{(K)}\mX^{(i)}\right)^\top
        \left(\mW^{(Q)}\mX^{(i)}_{:,k}\right)
        \right]
        \right\| \nonumber \\
        &< r_{\max} + \frac{\epsilon}{4} \label{eq:contextual_mapping_condition1}
    \end{align}
    holds for any $i \in [N]$ and $k \in [n]$, and also
    \begin{align}
        &\left\|
        \mathcal{F}^{(SA)}_{S}\left(\mX^{(i)}\right)_{:,k}
        -
        \mathcal{F}^{(SA)}_{S}\left(\mX^{(j)}\right)_{:,l}
        \right\| \nonumber\\
        &\geq \left\|\mX^{(i)}_{:,k} - \mX^{(j)}_{:,l}\right\|
        - \left\|
        \mW^{(O)}\left(\mW^{(V)}\mX^{(i)}\right)
        \sigma_S
        \left[
        \left(\mW^{(K)}\mX^{(i)}\right)^\top
        \left(\mW^{(Q)}\mX^{(i)}_{:,k}\right)
        \right]
        \right\| \nonumber\\
        &\quad\quad\quad\quad -
        \left\|
        \mW^{(O)}\left(\mW^{(V)}\mX^{(j)}\right)
        \sigma_S
        \left[
        \left(\mW^{(K)}\mX^{(j)}\right)^\top
        \left(\mW^{(Q)}\mX^{(j)}_{:,l}\right)
        \right]
        \right\| \nonumber\\
        &> \epsilon - \frac{\epsilon}{4} - \frac{\epsilon}{4} 
        = \frac{\epsilon}{2} \label{eq:contextual_mapping_condition2}
    \end{align}
    for any $i,j \in [N]$ and $k,l \in [n]$ such that $\mX^{(i)}_{:,k} \neq \mX^{(j)}_{:,l}$. So all that remains to prove is to construct a self-attention function $\mathcal{F}^{(SA)}$ that has the properties described above and can also distinguish input tokens $\mX^{(i)}_{:,k} = \mX^{(j)}_{:,l}$ such that $\mathcal{V}^{(i)} \neq \mathcal{V}^{(j)}$.
    
    Let $\delta = 2\log n + 3$ and fix any vectors $\vu,\vu' \in \R^s$ with 
    \begin{align}
        \left|\vu^\top \vu' \right| = \left(|\mathcal{V}|+1\right)^4
        \frac{\pi d}{8} \frac{\delta}{\epsilon r_{\min}}. \label{eq:setting_of_u}
    \end{align}
    Then, according to \cref{lem:attention_projection} with $\delta = 2\log n + 3$, we see that there exists a unit vector $\vv \in \R^d$ such that 
    \begin{gather}
        \left|
        \left(\mW^{(K)}\vv_a\right)^\top
        \left(\mW^{(Q)}\vv_c\right)
        -
        \left(\mW^{(K)}\vv_b\right)^\top
        \left(\mW^{(Q)}\vv_c\right)
        \right|
        > \delta, \label{eq:attention_projection_delta} \\
        \frac{1}{\left(|\mathcal{V}|+1\right)^2}\sqrt{\frac{8}{\pi d}}
        \left\|\vv_c\right\|
        \leq
        \left|\vv^\top \vv_c\right|
        \leq 
        \left\|\vv_c\right\| \label{eq:attention_projection_min_max}
    \end{gather}
    for any $\vv_a,\vv_b,\vv_c \in \mathcal{V}$ with $\vv_a \neq \vv_b$, where $\mW^{(K)} = \vu \vv^\top \in \R^{s \times d}$ and $\mW^{(Q)} = \vu' \vv^\top \in \R^{s \times d}$.

    Furthermore, we configure $\mW^{(O)} \in \R^{d \times s}$ and $\mW^{(V)} \in \R^{s \times d}$ to be $\mW^{(V)} = \vu'' \vv^\top$ for any nonzero vector $\vu'' \in \R^s$ such that 
    \begin{align}
        \left\|\mW^{(O)} \vu''\right\|
        = \frac{\epsilon}{4 r_{\max}} \label{eq:condition_on_w_o}
    \end{align}
    holds. This can be accomplished, e.g., $\mW^{(O)} = \vu''' \vu''^\top$ for any vector $\vu''' \in \R^d$ which satisfies $\left\|\vu'''\right\| = \epsilon/(4r_{\max}\left\|\vu''\right\|^2)$.
    In this case, the value of the self-attention without a skip-connection is upper-bounded by
    \begin{align*}
        &\left\|
        \mW^{(O)}\left(\mW^{(V)}\mX^{(i)}\right)
        \sigma_S
        \left[
        \left(\mW^{(K)}\mX^{(i)}\right)^\top
        \left(\mW^{(Q)}\mX^{(i)}_{:,k}\right)
        \right]
        \right\| \\
        &= \left\|
        \sum_{k'=1}^n s_{k'}^k\mW^{(O)}\left(\mW^{(V)}\mX^{(i)}\right)_{:,k'}
        \right\|
        \quad\text{with } s^k_{k'} = \sigma_S
        \left[
        \left(\mW^{(K)}\mX^{(i)}\right)^\top
        \left(\mW^{(Q)}\mX^{(i)}_{:,k}\right)
        \right]_{k'} \\
        &\leq \sum_{k'=1}^n s_{k'}^k\left\|
         \mW^{(O)}\left(\mW^{(V)}\mX^{(i)}\right)_{:,k'}
        \right\| \\
        &\leq \max_{k' \in [n]} \left\|
         \mW^{(O)}\left(\mW^{(V)}\mX^{(i)}\right)_{:,k'}
        \right\| \quad (\text{from} \sum_{k'=1}^n s^k_{k'} = 1) \\
        &=  \max_{k' \in [n]} \left\|\mW^{(O)} \vu'' \vv^\top \mX^{(i)}_{:,k'}\right\|\\
        &=  \left\|\mW^{(O)} \vu''\right\| \cdot \max_{k' \in [n]} \left| \vv^\top \mX^{(i)}_{:,k'}\right|\\
        &\leq \frac{\epsilon}{4 r_{\max}} \cdot 
        \max_{k' \in [n]} \left\|\mX^{(i)}_{:,k'}\right\|
        \quad (\text{from \eqref{eq:attention_projection_min_max} and \eqref{eq:condition_on_w_o}}) \numberthis \label{eq:output_norm_of_self_attention}\\
        &< \frac{\epsilon}{4}, \numberthis 
    \end{align*}
    which means that \eqref{eq:contextual_mapping_condition1} and \eqref{eq:contextual_mapping_condition2} are satisfied with the weight matrices defined above.

    Now, we see that the weight matrices $\mW^{(O)},\mW^{(V)},\mW^{(K)},\mW^{(Q)}$ configured above can distinguish the most subtle pattern of input tokens, i.e. $\mX^{(i)}_{:,k} = \mX^{(j)}_{:,l}$ with $\mathcal{V}^{(i)} \neq \mathcal{V}^{(j)}$.

    Pick up any $i,j \in [N]$ and $k,l \in [n]$ such that $\mX^{(i)}_{:,k} = \mX^{(j)}_{:,l}$ and $\mathcal{V}^{(i)} \neq \mathcal{V}^{(j)}$. In addition, we define $\va^{(i)}, \va^{(j)}$ by
    \begin{align}
        \va^{(i)}
        &= \left(\mW^{(K)}\mX^{(i)}\right)^\top
        \left(\mW^{(Q)}\mX^{(i)}_{:,k}\right) \in \R^n, \\
        \va^{(j)}
        &= \left(\mW^{(K)}\mX^{(j)}\right)^\top
        \left(\mW^{(Q)}\mX^{(j)}_{:,l}\right) \in \R^n.
    \end{align}
    Then, \eqref{eq:attention_projection_delta} and \eqref{eq:attention_projection_min_max} imply that $\va^{(i)}$ and $\va^{(j)}$ are tokenwise $(r,\delta)$-separated, where $r$ is defined by
    \begin{align}
        r = \left(|\mathcal{V}|+1\right)^4
        \frac{\pi d}{8} \frac{\delta r_{\max}^2}{\epsilon r_{\min}},
    \end{align}
    because for any $k' \in [n]$, we have
    \begin{align*}
        \left|a^{(i)}_{k'}\right|
        &= \left|\left(\mW^{(K)}\mX^{(i)}_{:,k'}\right)^\top
        \left(\mW^{(Q)}\mX^{(i)}_{:,k}\right)\right| \\
        &= \left|\left(\vv^\top \mX^{(i)}_{:,k'}\right)^\top \right|
        \cdot
        \left|\vu^\top \vu'^\top \right|
        \cdot
        \left| \left(\vv^\top \mX^{(i)}_{:,k}\right)\right| \\
        &\leq \left(|\mathcal{V}|+1\right)^4
        \frac{\pi d}{8} \frac{\delta}{\epsilon r_{\min}}r_{\max}^2 \quad (\text{from \eqref{eq:setting_of_u} and \eqref{eq:attention_projection_min_max}}), \numberthis
    \end{align*}
    and the same upper-bound also holds for $\va^{(j)}$.

    Since $\mathcal{V}^{(i)} \neq \mathcal{V}^{(j)}$ and there exists no duplicate token in $\mX^{(i)}$ and $\mX^{(j)}$ respectively, it follows from \cref{lem: boltzmann_separation} that
    \begin{align}
        \left|\boltz(\va^{(i)}) - \boltz(\va^{(j)})\right|
        &> \delta'
        = (\log n)^2 e^{-2r},
    \end{align}
    that is,
    \begin{align}
        \left|
        \left(\va^{(i)}\right)^\top \sigma_S\left[\va^{(i)}\right]
        -
        \left(\va^{(j)}\right)^\top \sigma_S\left[\va^{(j)}\right]
        \right| > \delta'. \label{eq:a_i_minus_a_j}
    \end{align}
    Since $\mX^{(i)}_{:,k} = \mX^{(j)}_{:,l}$ by assumption, \eqref{eq:a_i_minus_a_j} are further expanded as
    \begin{align*}
        \delta'
        &< \left|
        \left(\va^{(i)}\right)^\top \sigma_S\left[\va^{(i)}\right]
        -
        \left(\va^{(j)}\right)^\top \sigma_S\left[\va^{(j)}\right]
        \right| \\
        &= \left|
        \left(\mX^{(i)}_{:,k}\right)^\top \left(\mW^{(Q)}\right)^\top \mW^{(K)}
        \left(
        \mX^{(i)}
        \sigma_S\left[\va^{(i)}\right]
        -
        \mX^{(j)}
        \sigma_S\left[\va^{(j)}\right]
        \right)
        \right| \\
        &= \left|
        \left(\mX^{(i)}_{:,k}\right)^\top \vv \vu'^\top \vu \vv^\top
        \left(
        \mX^{(i)}
        \sigma_S\left[\va^{(i)}\right]
        -
        \mX^{(j)}
        \sigma_S\left[\va^{(j)}\right]
        \right)
        \right| \\
        &= \left|\vv^\top \mX^{(i)}_{:,k}\right|
        \cdot
        \left|\vu^\top \vu'\right|
        \cdot
        \left|
        \left(\vv^\top\mX^{(i)}\right)
        \sigma_S\left[\va^{(i)}\right]
        -
        \left(\vv^\top\mX^{(j)}\right)
        \sigma_S\left[\va^{(j)}\right]
        \right| \\
        &\leq r_{\max}
        \cdot
        \left(|\mathcal{V}|+1\right)^4
        \frac{\pi d}{8} \frac{\delta}{\epsilon r_{\min}}
        \cdot
        \left|
        \left(\vv^\top\mX^{(i)}\right)
        \sigma_S\left[\va^{(i)}\right]
        -
        \left(\vv^\top\mX^{(j)}\right)
        \sigma_S\left[\va^{(j)}\right]
        \right|, \numberthis
    \end{align*}
    where the last inequality follows from \eqref{eq:setting_of_u} and \eqref{eq:attention_projection_min_max}.

    Therefore, the gap between the outputs of th self-attention function for $\mX^{(i)}$ and $\mX^{(j)}$ are lower-bounded as follows:
    \begin{align*}
        &\left\|
        \mathcal{F}^{(SA)}_S \left(\mX^{(i)}\right)_{:,k}
        -
        \mathcal{F}^{(SA)}_S \left(\mX^{(j)}\right)_{:,l}
        \right\| \\
        &= \left\|
        \mW^{(O)}\left(\mW^{(V)}\mX^{(i)}\right)
        \sigma_S\left[\va^{(i)}\right]
        -
        \mW^{(O)}\left(\mW^{(V)}\mX^{(j)}\right)
        \sigma_S\left[\va^{(j)}\right]
        \right\|
        \quad (\because \mX^{(i)}_{:,k} = \mX^{(j)}_{:,l}) \\
        &= \left\|\mW^{(O)} \vu''\right\| \cdot 
        \left|
        \left(\vv^\top \mX^{(i)}\right)
        \sigma_S\left[\va^{(i)}\right]
        -
        \left(\vv^\top \mX^{(j)}\right)
        \sigma_S\left[\va^{(j)}\right]
        \right| \\
        &> \frac{\epsilon}{4r_{\max}} \cdot \frac{\delta'}{(|\mathcal{V}|+1)^4} \frac{8\epsilon r_{\min}}{\pi d \delta r_{\max}}, \numberthis 
    \end{align*}
    where $\delta$ and $\delta'$ are defined respectively as
    \begin{align}
        \delta
        &= 2\log n + 3, \\
        \delta'
        &= (\log n)^2 e^{-2r} \quad \text{with} \quad r = \left(|\mathcal{V}|+1\right)^4
        \frac{\pi d}{8} \frac{\delta r_{\max}^2}{\epsilon r_{\min}}.
    \end{align}
    By plugging $\delta$ and $\delta'$, the above inequality is simplified as
    \begin{align*}
        &\left\|
        \mathcal{F}^{(SA)}_S \left(\mX^{(i)}\right)_{:,k}
        -
        \mathcal{F}^{(SA)}_S \left(\mX^{(j)}\right)_{:,l}
        \right\| \\
        &> \frac{2(\log n)^2 \epsilon^2 r_{\min}}{r_{\max}^2 (|\mathcal{V}|+1)^4 (2\log n + 3) \pi d} \exp \left(-\left(|\mathcal{V}|+1\right)^4
        \frac{(2\log n + 3)\pi d r_{\max}^2}{4\epsilon r_{\min}}\right). \numberthis
    \end{align*}
\end{proof} 

\subsection{\texorpdfstring{Proof of \cref{cor:memorization_of_one_layer_Transformer}}{}}
\begin{proof}
    According to \cref{thm:softmax_is_contextual_mapping}, we can construct such self-attention to be contextual mapping, that is,
    there exist weight matrices $\mW^{(O)} \in \R^{d \times s}$ and $\mW^{(V)}, \mW^{(K)}, \mW^{(Q)} \in \R^{s \times d}$ such that $\mathcal{F}^{(SA)}_S$ with $h=1$ is a $(r,\delta)$-contextual mapping for the input sequences $\mX^{(1)},\dots,\mX^{(N)}$ with $r$ and $\delta$ defined by
    \begin{align*}
        r &= r_{\max} + \frac{\epsilon}{4}, \numberthis\\
        \delta &= \frac{\epsilon r_{\min} \log n}{r_{\max}^2 (|\mathcal{V}|+1)^4 (2\log n + 3) \pi d} \\
        &\quad\quad\quad\quad
        \cdot \exp \left(-\left(|\mathcal{V}|+1\right)^4
        \frac{(2\log n + 3) \pi d r_{\max}^2 }{4\epsilon r_{\min}}\right). \numberthis
    \end{align*}
    So what remains to do is to associate each context id with the corresponding output label using a feed-forward neural network $\mathcal{F}^{(FF)}$.
    Construction of such a network is a typical memorization task of a one-hidden-layer feed-forward neural network.
    Here we adopt the implementation from \citet{zhang_understanding_2016}. 
    In this case, since the possible number of context ids is upper-bounded by $nN$, the required parameters for the FF layer with output dimension $d$ is at most $d \times (2nN + d)$ \citep{zhang_understanding_2016}.
    As for the self-attention layer, rank-$1$ weight matrices $\mW^{(O)} \in \R^{d \times s}$ and $\mW^{(V)}, \mW^{(K)}, \mW^{(Q)} \in \R^{s \times d}$ all require $s + d$ parameters each. Thus, the number of parameters of the self-attention layer is 4(s + d).
    In conclusion, the total number of parameters for one-layer Transformers to memorize the dataset is at most $4(s + d) + d(2nN + d)$.
\end{proof}

\subsection{\texorpdfstring{Proof of \cref{cor:memorization_of_one_layer_Transformer_with_positional_encodings}}{}}
\begin{proof}
First, we define the positional encoding matrix $\mE \in \R^{d \times n}$ as follows:
\begin{align*}
    \mE
    =
    \left(
    \begin{array}{cccc}
    2r_{\max} & 4r_{\max} & \dots & 2nr_{\max} \\
    \vdots & \vdots & \ddots & \vdots \\
    2r_{\max} & 4r_{\max} & \dots & 2nr_{\max}
    \end{array}
    \right). \numberthis
\end{align*}
Then, $\mX^{(1)}+\mE,\dots,\mX^{(N)}+\mE$ are tokenwise $(r_{\max}, (2n+1)r_{\max}, \epsilon)$-separated, and each sentence has no duplicate token.

From \cref{thm:softmax_is_contextual_mapping},
there exist weight matrices $\mW^{(O)} \in \R^{d \times s}$ and $\mW^{V}, \mW^{K}, \mW^{Q} \in \R^{s \times d}$ such that $\mathcal{F}^{(SA)}_S$ with $h=1$ is a $(r,\delta)$-contextual mapping for the input sequences $\mX^{(1)}+\mE,\dots,\mX^{(N)}+\mE$ with $r$ and $\delta$ defined by
\begin{align*}
    r &= (2n+1)r_{\max} + \frac{\epsilon}{4}, \numberthis\\
    \delta
    &= \frac{2(\log n)^2 \epsilon^2}{(2n+1)^2r_{\max} (nN+1)^4 (2\log n + 3) \pi d} \\
    &\quad\quad\quad\quad
    \cdot \exp \left(-(2n+1)^2\left(nN+1\right)^4
        \frac{(2\log n + 3)\pi d r_{\max}}{4\epsilon}\right), \numberthis
\end{align*}
because the size of the vocabulary set of $\mX^{(1)}+\mE,\dots,\mX^{(N)}+\mE$ is at most $nN$.
Hence, hereafter we do the same thing as in the proof of \cref{cor:memorization_of_one_layer_Transformer}, that is, implementing a feed-forward neural network $\mathcal{F}^{(FF)}$ which associates each context id with the corresponding label. 
The total number of parameters required to implement this construction can be evaluated in the same manner as in \cref{cor:memorization_of_one_layer_Transformer}.
\end{proof}

\subsection{\texorpdfstring{Proof of \cref{pro:Transformers_are_universal_approximator}}{}}
\begin{proof}
We show the propositioin by the same steps as in \citet{yun_are_2023}. Namely,
\begin{enumerate}
    \item First, given a permutation equivariant continuous function $f \in \mathcal{F}_{\mathrm{PE}}$ defined on a compact set, it follows from typical analysis that $f$ can be approximated by a step function with arbitrary precision in terms of $p$-norm. Therefore, to show a universal approximation theorem, it is sufficient to show that such a step function can be approximated by a Transformer with one self-attention layer.

    \item Second, we use a first feed-forward neural network layer $\mathcal{F}^{(FF)}_1$ to quantize the input domain, reducing the problem to memorization of finite samples.

    \item Then, by a similar analysis as in \cref{cor:memorization_of_one_layer_Transformer}, it can be shown that a combination of the self-attention layer $\mathcal{F}^{(SA)}$ and $\mathcal{F}^{(FF)}_2$ can memorize the step function almost everywhere, in the sense that quantized input domains corresponding to sentences with duplicate tokens are negligibly small. 
\end{enumerate}
We hereafter provide rough proofs of the three steps outlined above, because there are multiple ways to construct a model that satisfies the above requirements, and we do not pursue the efficiency of feed-forward neural networks in this paper.

First, without loss of generality, we ignore skip-connections in $\mathcal{F}^{(FF)}_1$ and $\mathcal{F}^{(FF)}_2$.
\begin{enumerate}
    \item Since $f$ is a continuous function on a compact set, $f$ has maximum and minimum values on the domain.
    By scaling with $\mathcal{F}^{(FF)}_1$ and $\mathcal{F}^{(FF)}_2$, $f$ is assumed to be normalized without loss of generality: for any $\mZ \in \R^{d \times n} \setminus [0,1]^{d \times n}$
    \begin{align}
        f(\mZ) = 0,
    \end{align}
    and for any $\mZ \in [-1,1]^{d\ \times n}$
    \begin{align*}
        -1 \leq f(\mZ) \leq 1. \numberthis
    \end{align*}
    Let $D \in \mathbb{N}$ be the granularity of a grid
    \begin{align}
        \mathbb{G}_D
        = \{1/D, 2/D, \dots, 1\}^{d \times n} \subset \R^{d \times n}
    \end{align}
    such that a piece-wise constant approximation
    \begin{align*}
        \overline{f}(\mZ)
        = \sum_{\mL \in \mathbb{G}_D} f\left(\mL\right) \1_{\mZ \in \mL + [-1/D,0)^{d \times n}} \numberthis
    \end{align*}
    satisfies 
    \begin{align}
        \dist_p(f, \overline{f}) < \epsilon/3.
        \label{eq:estimate_of_step_function}
    \end{align}
    Such a $D$ always exists because of uniform continuity of $f$.

    \item We use $\mathcal{F}^{(FF)}_1$ to quantize the input domain into $\mathbb{G}_D$.

    For any small $\delta > 0$, the following $\delta$-approximated step function can be constructed with one-hidden-layer feed-forward neural network: for any $z \in \R$
    \begin{align}
        \frac{
        \sigma_R\left[z/\delta\right]
        -
        \sigma_R\left[z/\delta - 1\right]
        }{D}
        = \begin{cases}
            0 & z < 0 \\
            z/\delta D & 0 \leq z < \delta \\
            1/D & \delta \leq z
        \end{cases}.
    \end{align}
    By shifting and stacking this step function, we have an approximated multiple-step function
    \begin{align*}
        &\sum_{t = 0}^{D-1} \frac{\sigma_R\left[z/\delta-t/\delta D\right]
        -
        \sigma_R\left[z/\delta - 1-t/\delta D\right]
        }{D}\\
        &\approx \quant_D(z)
        =
        \begin{cases}
            0 & z < 0 \\
            1/D & 0\leq z < 1/D \\
            \vdots & \vdots \\
            1 & 1-1/D \leq z
        \end{cases}, \numberthis
    \end{align*}
    and subtracting the last step function from it,
    \begin{align}
        \sum_{t = 1}^{D} \frac{\sigma_R\left[z/\delta-t/\delta D\right]
        -
        \sigma_R\left[z/\delta - 1-t/\delta D\right]
        }{D}
        - \left(\sigma_R\left[z/\delta - 1/\delta\right]
        -
        \sigma_R\left[z/\delta - 1 - 1/\delta\right]\right)
    \end{align}
    approximately quantize $[0,1]$ into $\{1/D,\dots,1\}$, while it projects $\R \setminus [0,1]$ to $0$.

    These operations can be realized by one-hidden-layer neural network, and it is straightforward to approximate its extension $\quant_D$ to dimension $d \times n$, which we denote $\quant_D^{d \times n}: \R^{d \times n} \to \R^{d \times n}$.
    
    In addition to that, we also add a penalty term, with which we identify whether an input sequence is in $[0,1]^{d \times n}$ or not.
    This is defined by
    \begin{align*}
        &
        \sigma_R\left[(z-1)/\delta\right]
        -
        \sigma_R\left[(z-1)/\delta - 1\right]
        -
        \sigma_R\left[-z/\delta\right]
        -
        \sigma_R\left[-z/\delta - 1\right] \\
        &\approx\penal(z)
        =
        \begin{cases}
            -1 & z \leq 0 \\
            0 & 0 < z \leq 1 \\
            -1 & 1 < z
        \end{cases}, \numberthis
    \end{align*}
    which can also be implemented by one-hidden-layer feed-forward neural network.

    Combining these components together, the first feed-forward neural network layer $\mathcal{F}^{(FF)}_1$ approximates the following function:
    \begin{align*}
        \overline{\mathcal{F}}^{(FF)}_1(\mZ) 
        = \quant_D^{d \times n}(\mZ)
        + \sum_{t=1}^d \sum_{k=1}^n \penal(\mZ_{t,k}) \numberthis
    \end{align*} 
    Note that this function quantizes inputs in $[0,1]^{d \times n}$ with granularity $D$, while every element of the output is non-positive for inputs outside $[0,1]^{d \times n}$.
    In particular, the norm of the output is upper-bounded by
    \begin{align}
        \max_{\mZ \in \R^{d \times n}}\left\|\mathcal{F}^{(FF)}_1(\mZ)_{:,k}\right\|
        = dn \cdot \sqrt{d} \label{eq:maximum_norm_of_quantization}
    \end{align}
    for any $k \in [n]$.

    \item Let $\tilde{\mathbb{G}}_D \subset \mathbb{G}_D$ be a sub-grid
    \begin{align}
        \tilde{\mathbb{G}}_D
        = \left\{
        \mL \in \mathbb{G}_D
        \relmiddle|
        \forall k,l \in [n],\ \mL_{:,k} \neq \mL_{:,l}
        \right\},
    \end{align}
    and consider memorization of $\tilde{\mathbb{G}}_D$ with its labels given by $f(\mL)$ for each $\mL \in \tilde{\mathbb{G}}_D$.
    Note that the label sets are consistent because $f$ is a permutation equivariant function.
    Then, \cref{thm:softmax_is_contextual_mapping} allows us to construct a self-attention $\mathcal{F}^{(SA)}$ to be a contextual mapping for such input sequences, because $ \tilde{\mathbb{G}}_D$ can be regarded as tokenwise $(1/D, \sqrt{d}, 1/D)$-separated input sequences, each of which has no duplicate token by definition. The idea is that when the granularity $D$ of $\mathbb{G}_D$ is sufficiently large, the number of cells with duplicate tokens, that is, $|\mathbb{G}_D \setminus \tilde{\mathbb{G}}_D|$ is negligible compared to the total number $|\mathbb{G}_D|$ of cells, and thus the memorization of $\tilde{\mathbb{G}}_D$ suffices for universal approximation theorem.

    From the way the self-attention $\mathcal{F}^{(SA)}$ is constructed, we have
    \begin{align}
        \left\|\mathcal{F}^{(SA)}_S(\mZ)_{:,k} - \mZ_{:,k}\right\|
        < \frac{1}{4\sqrt{d}D}\max_{k' \in [n]}\left\|\mZ_{:,k'}\right\|
    \end{align}
    for any $k \in [n]$ and $\mZ \in \R^{d \times n}$.
    This follows from the fact that  $\mX^{(i)}$ in \eqref{eq:output_norm_of_self_attention} may actually be replaced with any input sequence $\mZ$, because $\vv$ in \eqref{eq:output_norm_of_self_attention} is a unit vector.
    In particular, combining this upper-bound with \eqref{eq:maximum_norm_of_quantization}, we have
    \begin{align*}
        \left\|\mathcal{F}^{(SA)}_S \circ \mathcal{F}^{(FF)}_1\left(\mZ_{:,k}\right) - \mathcal{F}^{(FF)}\left(\mZ_{:,k}\right)\right\|
        < \frac{dn}{4D}. \numberthis
    \end{align*}
    Thus, if we take large enough $D$, every element of the output for $\mZ \in \R^{d \times n} \setminus [0,1]^{d \times n}$ is upper-bounded by
    \begin{align}
        \mathcal{F}^{(SA)}_S \circ \mathcal{F}^{(FF)}_1\left(\mZ\right)_{t,k} < \frac{1}{4D}
        \quad (\forall t \in [d],\ k \in [n]),
    \end{align}
    while the output for $\mZ \in [0,1]^{d \times n}$ is lower-bounded by
    \begin{align}
        \mathcal{F}^{(SA)}_S \circ \mathcal{F}^{(FF)}_1\left(\mZ\right)_{t,k}
        > \frac{3}{4D}
        \quad (\forall t \in [d],\ k \in [n]).
    \end{align}
    
    Therefore, what remains to show is construct a feed-forward neural network $\mathcal{F}^{(FF)}_2$ which associates the context id of each $\mL \in \tilde{\mathbb{G}}_D \subset (3/4D, \infty)^{d \times n}$ to its corresponding label, while it outputs $0$ for any input matrix $\mZ \in (-\infty, 1/4D)^{d \times n}$.
    This can be accomplished by usual bump-functions.
    Precisely, construct a bump function of scale $R > 0$
    \begin{align}
        \bump_R(\mZ)
        &=\frac{f(\mL)}{dn}\sum_{t=1}^d\sum_{k=1}^n (\sigma_R\left[R(Z_{t,k}-L_{t,k})-1\right]
        - \sigma_R\left[R(Z_{t, k}-L_{t,k})\right] \\
        &\quad\quad\quad\quad\quad\quad\quad\quad
        + \sigma_R\left[R(Z_{t,k}-L_{t,k})+1\right])
    \end{align}
    for each input sequence $\mL \in \tilde{\mathbb{G}}_D$ and add up these functions to implement $\mathcal{F}^{(FF)}_2$.

    For large enough $R > 0$, $\mathcal{F}^{(FF)}_2$ maps each input sequence $\mL \in \tilde{\mathbb{G}}_D$ to its labels $f(\mL)$ and $\mZ \in (-\infty, 1/4D)^{d \times n}$ to 0.
    In addition, the value of $\mathcal{F}^{(FF)}_2$ is always bounded: $0 \leq \mathcal{F}^{(FF)}_2 \leq 1$. Thus, by taking sufficiently small $\delta > 0$, we have
    \begin{align}
        \dist_p\left(\mathcal{F}^{(FF)}_2 \circ \mathcal{F}^{(SA)}_S \circ \mathcal{F}^{(FF)}_1, \mathcal{F}^{(FF)}_2 \circ \mathcal{F}^{(SA)}_S \circ \overline{\mathcal{F}}^{(FF)}_1\right) < \frac{\epsilon}{3}.
        \label{eq:estimate_of_quantization}
    \end{align}
    Lastly, there are $|\mathbb{G}_D \setminus \tilde{\mathbb{G}}_D| = O\left(D^{-d}\left|\mathbb{G}_D\right|\right)$ input sequences with duplicate tokens, each corresponding to a cell of area $D^{-d}$. Thus, by taking sufficiently large $D$, areas of $\mathbb{G}_D \setminus \tilde{\mathbb{G}}_D$ becomes negligible and
    \begin{align}
        \dist_p\left(\mathcal{F}^{(FF)}_2 \circ \mathcal{F}^{(SA)}_S \circ \overline{\mathcal{F}}^{(FF)}_1,
        \overline{f}\right) < \frac{\epsilon}{3}. \label{eq:estimate_of_duplicate_area}
    \end{align}
    Combining \eqref{eq:estimate_of_step_function}, \eqref{eq:estimate_of_quantization} and \eqref{eq:estimate_of_duplicate_area} together, we get the approximation error of the Transformer:
    \begin{align}
        \dist_p\left(\mathcal{F}^{(FF)}_2 \circ \mathcal{F}^{(SA)}_S \circ \overline{\mathcal{F}}^{(FF)}_1,
        f\right) < \epsilon. 
    \end{align}
\end{enumerate}
\end{proof}

\section{Technical Lemmas}
We cite Lemma 13 in \citet{park_provable_2021}, with which we configure weight matrices of a self-attention mechanism.
\begin{lem}[\citet{park_provable_2021}]\label{lem:projection_into_scalar}
Let $d \in \mathbb{N}$. Then, for any finite subset $\mathcal{X} \subset \R^d$, there exists a unit vector $\vv \in \R^d$ such that
\begin{align}\label{eq:projection_into_scalar}
    \frac{1}{\left|\mathcal{X}\right|^2}\sqrt{\frac{8}{\pi d}} \left\|\vx - \vx'\right\|
    \leq \left|\vv^\top \left(\vx - \vx'\right)\right|
    \leq \left\|\vx - \vx'\right\|
\end{align}
holds for any $\vx,\vx' \in \mathcal{X}$.
\end{lem}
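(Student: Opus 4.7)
The plan is to argue by the probabilistic method: sample $\vv$ uniformly at random on the unit sphere $S^{d-1} \subset \R^d$, show that with positive probability the two-sided inequality holds simultaneously for every pair of distinct points in $\mathcal{X}$, and then invoke a union bound to extract a deterministic witness. The upper bound $|\vv^\top (\vx - \vx')| \leq \|\vx - \vx'\|$ is immediate from Cauchy--Schwarz together with $\|\vv\|=1$, so all of the work sits in the lower bound, and the case $\vx = \vx'$ is trivial.

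For any fixed pair $\vx,\vx' \in \mathcal{X}$ with $\vx \neq \vx'$, set $\vu = (\vx-\vx')/\|\vx-\vx'\|$. By orthogonal invariance of the uniform distribution on the sphere, $T := \vv^\top \vu$ depends on $\vu$ only through its being a unit vector and has the classical density
\begin{align*}
p(t) = c_d\,(1-t^2)^{(d-3)/2}, \qquad t \in [-1,1], \qquad c_d = \frac{\Gamma(d/2)}{\sqrt{\pi}\,\Gamma((d-1)/2)},
\end{align*}
which is maximised at $t=0$. This yields the crude anti-concentration estimate $\Pr[|T| < \eta] \leq 2\eta c_d$ for every $\eta > 0$.

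To recover the constant $\sqrt{8/(\pi d)}$ appearing in the statement, the next step is to sharpen $c_d$. Via the substitution $t = \sin\theta$ I rewrite $c_d^{-1} = 2\int_0^{\pi/2}\cos^{d-2}\theta\,d\theta$, and a standard Wallis-type inequality $\int_0^{\pi/2}\cos^n\theta\,d\theta \geq \sqrt{\pi/(2(n+1))}$ gives $c_d \leq \sqrt{(d-1)/(2\pi)} \leq \sqrt{d/(2\pi)}$. Setting $\eta = |\mathcal{X}|^{-2}\sqrt{8/(\pi d)}$ and union-bounding the single-pair failure probability $2\eta c_d$ over the at most $|\mathcal{X}|^2/2$ unordered pairs, the total failure probability is at most $|\mathcal{X}|^2 \eta\, c_d \leq \sqrt{4/\pi^2} = 2/\pi < 1$, so a deterministic $\vv$ satisfies every pairwise inequality at once.

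The main obstacle is purely quantitative constant-chasing: the threshold $\sqrt{8/(\pi d)}$ only just survives the union bound, with slack $1 - 2/\pi$, so the Wallis estimate must be used in its sharp one-sided form. A looser bound such as $c_d = O(\sqrt{d})$ without explicit constants would cost a factor that could invalidate the union bound and force a different scaling in the lemma. Beyond this bookkeeping there is no further geometric subtlety; the argument is exactly the one-dimensional specialisation of the Johnson--Lindenstrauss philosophy, and the $|\mathcal{X}|^2$ denominator is precisely the price of covering the $\binom{|\mathcal{X}|}{2}$ pairwise events by a union bound.
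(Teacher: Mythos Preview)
The paper does not actually prove this lemma; it is quoted as Lemma~13 of \citet{park_provable_2021} and used as a black box. Your probabilistic-method argument is the standard route to such a statement and, for $d \geq 3$, the constant-chasing is carried out correctly: the density bound $p(t)\leq c_d$, the Wallis estimate $c_d\leq\sqrt{(d-1)/(2\pi)}$, and the union bound combine to give failure probability at most $2/\pi<1$, exactly as you wrote.

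The only genuine gap is in low dimension. For $d=2$ the marginal density $c_2(1-t^2)^{-1/2}$ is \emph{not} maximised at $t=0$ (it diverges at $t=\pm 1$), so the step ``density $\leq c_d$ hence $\Pr[|T|<\eta]\leq 2\eta c_d$'' fails as written; and for $d=1$ there is no density at all. Both are easy to patch: for $d=2$ one computes $\Pr[|T|<\eta]=(2/\pi)\arcsin\eta\leq\eta$ directly (using concavity of $\sin$ on $[0,\pi/2]$), and the union bound then gives total failure probability at most $1/\sqrt{\pi}<1$; for $d=1$ the inequality is trivial since $|\vv^\top(\vx-\vx')|=\|\vx-\vx'\|$ whenever $\vv\in\{\pm 1\}$. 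With these two edge cases handled separately, your argument is complete and is precisely the Johnson--Lindenstrauss-style proof one expects the cited reference to contain.
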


The following lemma follows immediately from \cref{lem:projection_into_scalar}. We use $\mW^{(K)}$ and $\mW^{(Q)}$ in the lemma as low-rank weight matrices.\footnote{It is easy to show that different unit vectors $\vv, \vv' \in \R^d$ may be used for $\mW^{(K)}$ and $\mW^{(Q)}$, respectively, that is, $\mW^{(K)} = \vu \vv^\top$ and $\mW^{(Q)} = \vu' {\vv'}^\top$, as long as $\vv$ and $\vv'$ satisfy \eqref{eq:projection_into_scalar}.}
\begin{lem}\label{lem:attention_projection}
    Given $(r_{\min},r_{\max},\epsilon)$-separated finite vocabulary $\mathcal{V} \subset \R^{d}$ with $r_{\min} > 0$. Then, for any $\delta > 0$, there exists a unit vector $\vv \in \R^d$ such that for any vectors $\vu,\vu' \in \R^s$ with 
    \begin{align}
        \left|\vu^\top \vu' \right| = \left(|\mathcal{V}|+1\right)^4
        \frac{\pi d}{8} \frac{\delta}{\epsilon r_{\min}},
    \end{align}
    we have
    \begin{gather}
        \left|
        \left(\mW^{(K)}\vv_a\right)^\top
        \left(\mW^{(Q)}\vv_c\right)
        -
        \left(\mW^{(K)}\vv_b\right)^\top
        \left(\mW^{(Q)}\vv_c\right)
        \right|
        > \delta, \\
        \frac{1}{\left(|\mathcal{V}|+1\right)^2}\sqrt{\frac{8}{\pi d}}
        \left\|\vv_c\right\|
        \leq
        \left|\vv^\top \vv_c\right|
        \leq 
        \left\|\vv_c\right\|
    \end{gather}
    for any $\vv_a, \vv_b, \vv_c \in \mathcal{V}$ with $\vv_a \neq \vv_b$, where $\mW^{(K)} = \vu\vv^\top \in \R^{s \times d}$ and $\mW^{(Q)} = \vu'\vv^\top \in \R^{s \times d}$.
\end{lem}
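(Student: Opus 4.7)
The plan is to reduce the statement to the previously cited projection lemma (Lemma~\ref{lem:projection_into_scalar}) by augmenting the vocabulary with the zero vector. First I will apply Lemma~\ref{lem:projection_into_scalar} to the finite set $\mathcal{V} \cup \{\vzero\} \subset \R^d$, which has cardinality at most $|\mathcal{V}|+1$. This produces a unit vector $\vv \in \R^d$ satisfying
\begin{align*}
\frac{1}{(|\mathcal{V}|+1)^2}\sqrt{\frac{8}{\pi d}}\, \|\vx - \vx'\|
\,\leq\, \left|\vv^\top(\vx - \vx')\right|
\,\leq\, \|\vx - \vx'\|
\end{align*}
for every $\vx, \vx' \in \mathcal{V} \cup \{\vzero\}$. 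Specializing to $\vx' = \vzero$ and $\vx = \vv_c \in \mathcal{V}$ immediately yields the second displayed inequality of the lemma, since $\vv$ has unit norm.

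For the first inequality, I will exploit the rank-one structure $\mW^{(K)} = \vu \vv^\top$, $\mW^{(Q)} = \vu' \vv^\top$ to factor the attention product into a triple scalar product:
\begin{align*}
\left(\mW^{(K)}\vv_a\right)^\top \left(\mW^{(Q)}\vv_c\right)
\,=\, (\vv^\top \vv_a)\,(\vu^\top \vu')\,(\vv^\top \vv_c).
\end{align*}
Subtracting the analogous expression with $\vv_b$ in place of $\vv_a$ and taking absolute values, the quantity to be bounded below equals $|\vu^\top \vu'|\cdot|\vv^\top(\vv_a - \vv_b)|\cdot|\vv^\top \vv_c|$. I will then apply the lower bound from Lemma~\ref{lem:projection_into_scalar} to each of the two projection factors, combined with the hypotheses $\|\vv_a - \vv_b\| > \epsilon$ (token $\epsilon$-separation) and $\|\vv_c\| > r_{\min}$ (lower bound on token norms). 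The two $(|\mathcal{V}|+1)^{-2}\sqrt{8/(\pi d)}$ factors multiply to $(|\mathcal{V}|+1)^{-4}(8/(\pi d))$, and substituting the prescribed value $|\vu^\top \vu'| = (|\mathcal{V}|+1)^4\,\frac{\pi d}{8}\cdot\frac{\delta}{\epsilon r_{\min}}$ causes this factor to cancel exactly, leaving the strict bound $> \delta$ thanks to the strict token-separation inequalities.

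The argument is essentially bookkeeping once the right set is fed into Lemma~\ref{lem:projection_into_scalar}, so I do not anticipate a serious obstacle. The one step requiring care is the choice of augmentation: applying Lemma~\ref{lem:projection_into_scalar} to $\mathcal{V}$ alone would furnish only lower bounds on pairwise differences $|\vv^\top(\vv_a - \vv_b)|$, but the triple-product estimate also requires an absolute lower bound on $|\vv^\top \vv_c|$. Adjoining $\vzero$ to the input set is precisely what supplies this absolute bound and explains the appearance of $(|\mathcal{V}|+1)$ rather than $|\mathcal{V}|$ in the denominators of the final estimate.
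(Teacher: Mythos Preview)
Your proposal is correct and follows essentially the same approach as the paper: apply Lemma~\ref{lem:projection_into_scalar} to the augmented set $\mathcal{V}\cup\{\vzero\}$ to obtain the unit vector $\vv$, specialize to $\vx'=\vzero$ for the second inequality, and then factor the rank-one attention difference as $|\vu^\top\vu'|\cdot|\vv^\top(\vv_a-\vv_b)|\cdot|\vv^\top\vv_c|$ before applying the separation hypotheses. Your remark about why the augmentation with $\vzero$ is needed (and why $(|\mathcal{V}|+1)$ rather than $|\mathcal{V}|$ appears) is exactly the point.
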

\begin{proof}
    By applying \cref{lem:projection_into_scalar} to $\mathcal{V} \cup \{0\}$, we know that there exists a unit vector $\vv \in \R^d$ such that for any $\vv_a,\vv_b \in \mathcal{V} \cup \{0\}$ such that $\vv_a \neq \vv_b$, we have
    \begin{align}
        \frac{1}{\left(|\mathcal{V}|+1\right)^2}\sqrt{\frac{8}{\pi d}}
        \left\|\vv_a - \vv_b\right\|
        \leq
        \left|\vv^\top \left(\vv_a - \vv_b\right)\right|
        \leq 
        \left\|\vv_a - \vv_b\right\|.
    \end{align}
    In particular, this means that for any $\vv_c \in \mathcal{V}$
    \begin{align}
        \frac{1}{\left(|\mathcal{V}|+1\right)^2}\sqrt{\frac{8}{\pi d}}
        \left\|\vv_c\right\|
        \leq
        \left|\vv^\top \vv_c\right|
        \leq 
        \left\|\vv_c\right\|
    \end{align}
    holds. Thus, pick up arbitrary vectors $\vu, \vu' \in \R^s$ with
    \begin{align}
        \left|\vu^\top \vu' \right| = \left(|\mathcal{V}|+1\right)^4
        \frac{\pi d}{8} \frac{\delta}{\epsilon r_{\min}},
    \end{align}
    and by setting $\mW^{(K)} = \vu\vv^\top \in \R^{s \times d}$ and $\mW^{(Q)} = \vu'\vv^\top \in \R^{s \times d}$, we have
    \begin{align*}
        &\left|
        \left(\mW^{(K)}\vv_a\right)^\top
        \left(\mW^{(Q)}\vv_c\right)
        -
        \left(\mW^{(K)}\vv_b\right)^\top
        \left(\mW^{(Q)}\vv_c\right)
        \right| \\
        &=\left|
        \left(\vv_a - \vv_b\right)^\top \left(\mW^{(K)}\right)^\top
        \left(\mW^{(Q)}\vv_c\right)
        \right| \\
        &= \left|\left(\vv_a - \vv_b\right)^\top \vv\right|
        \cdot
        \left|\vu^\top \vu' \right|
        \cdot
        \left|\vv^\top \vv_c\right| \\
        &\geq \frac{1}{\left(|\mathcal{V}|+1\right)^2}\sqrt{\frac{8}{\pi d}} \left\|\vv_a - \vv_b\right\|
        \cdot
        \left(|\mathcal{V}|+1\right)^4
        \frac{\pi d}{8} \frac{\delta}{\epsilon r_{\min}}
        \cdot
        \frac{1}{\left(|\mathcal{V}|+1\right)^2}\sqrt{\frac{8}{\pi d}}
        \left\|\vv_c\right\| \\
        &> \delta \numberthis,
    \end{align*}
    where the last inequality follows from $(r_{\min}, r_{\max}, \epsilon)$-separatedness of $\mathcal{V}$.
\end{proof}

\subsection{Properties of Boltzmann Operator}
For any vector $\va = (a_1,\dots,a_n) \in \R^n$, let denote $\vp = (p_1,\dots,p_n) = \sigma_S[\va]$.
In addition, we define the Boltzmann operator, partition function, and entropy as follows.
\begin{align}
    &\boltz(\va)=\sum_{i=1}^ {n}a_ip_i,\\
    &\mathcal{L}(\va)=\log\left(\sum_{i=1}^{n}e^{a_i}\right),\\
    &\mathcal{S}(\boldsymbol{p})=-\sum_{i=1}^{n}p_i\log p_i.
\end{align}
The following lemma shows that the Boltzmann operator is monotonically decreasing when the maximum and the rest of the arguments are far enough apart.
\begin{lem}[Monotonicity]\label{lem:monotonicity_of_boltzmann}
Let $n \in \mathbb{N}_+$, $i \in [n]$ and $\va = (a_1,\dots,a_n) \in \R^n$.
Then, the derivative of the Boltzmann operator $\boltz(\va) = \va^\top \sigma_S[\va]$ with respect $a_i$ is
\begin{align}
        \frac{\partial}{\partial a_i}\boltz(\va)
        = p_i(1 + \log p_i + \mathcal{S}(\vp)). \label{eq:derivative_of_boltzmann}
\end{align}
In particular, the Boltzmann operator is monotonically decreasing with respect to $a_i$ whenever
\begin{align}
    a_i < \max_{j \in [n]} a_j - \log n - 1
\end{align}
holds.
\end{lem}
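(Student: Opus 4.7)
My plan has two parts corresponding to the two claims in the lemma.

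For the derivative formula, I would first recall the standard softmax Jacobian:
\begin{align*}
\frac{\partial p_j}{\partial a_i} = p_i(\delta_{ij} - p_j).
\end{align*}
Applying this to $\boltz(\va) = \sum_j a_j p_j$ gives
\begin{align*}
\frac{\partial}{\partial a_i}\boltz(\va)
= p_i + \sum_j a_j\, p_i(\delta_{ij} - p_j)
= p_i\bigl(1 + a_i - \boltz(\va)\bigr).
\end{align*}
To convert this into the stated form, I would use the two identities $\log p_i = a_i - \mathcal{L}(\va)$ and
\begin{align*}
\mathcal{S}(\vp) = -\sum_j p_j(a_j - \mathcal{L}(\va)) = \mathcal{L}(\va) - \boltz(\va),
\end{align*}
whose sum yields exactly $\log p_i + \mathcal{S}(\vp) = a_i - \boltz(\va)$. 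Substituting into the previous display produces the claimed $\frac{\partial}{\partial a_i}\boltz(\va) = p_i(1 + \log p_i + \mathcal{S}(\vp))$.

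For the monotonicity claim, since $p_i > 0$ it suffices to show $1 + \log p_i + \mathcal{S}(\vp) < 0$ under the hypothesis. I would bound the two non-constant terms separately: the entropy satisfies $\mathcal{S}(\vp) \leq \log n$ since $\vp$ is a distribution on $n$ outcomes, and
\begin{align*}
\log p_i = a_i - \mathcal{L}(\va) \leq a_i - \max_{j \in [n]} a_j < -\log n - 1,
\end{align*}
where the first inequality uses $\mathcal{L}(\va) \geq \max_j a_j$ and the second uses the hypothesis on $a_i$. Summing the two bounds gives $1 + \log p_i + \mathcal{S}(\vp) < 1 - \log n - 1 + \log n = 0$, which completes the proof.

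There is no real obstacle here: the first part is a standard softmax computation combined with the log-partition/entropy identity, and the second part is an immediate consequence of the uniform entropy bound together with the log-sum-exp lower bound by the maximum. The only mildly tricky move is recognizing the algebraic rewrite $a_i - \boltz(\va) = \log p_i + \mathcal{S}(\vp)$, but this is essentially the definition of entropy in disguise.
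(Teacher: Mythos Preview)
Your proof is correct and uses essentially the same ingredients as the paper: the softmax Jacobian, the identity $\boltz(\va)=\mathcal{L}(\va)-\mathcal{S}(\vp)$ (equivalently $a_i-\boltz(\va)=\log p_i+\mathcal{S}(\vp)$), and the bounds $\mathcal{S}(\vp)\le\log n$ and $\mathcal{L}(\va)\ge\max_j a_j$. The only organizational difference is that the paper first writes $\boltz=\mathcal{L}-\mathcal{S}$ and then differentiates each piece (computing $\partial\mathcal{S}/\partial a_i$ via the chain rule), whereas you differentiate $\sum_j a_j p_j$ directly to get $p_i(1+a_i-\boltz(\va))$ and only then invoke the identity---your route is slightly more economical but not substantively different.
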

\begin{proof}
Since
\begin{align*}
    \mathcal{L}(\va) - \mathcal{S}(\vp)
    &= \log\left(\sum_{j=1}^{n}e^{a_{j}}\right)
    + \sum_{k=1}^{n}p_{k}\log p_{k} \\
    &= \sum_{k=1}^n p_{k} \log \left(p_{k} \sum_{j=1}^n e^{a_j}\right) \\
    &= \sum_{k=1}^n p_k \log e^{a_k}
    = \sum_{k=1}^n p_k a_k
    = \boltz(\va), \numberthis
\end{align*}
we have
\begin{align*}
    \frac{\partial}{\partial a_i}\boltz(\va) = \frac{\partial}{\partial a_i}\mathcal{L}(\va) - \frac{\partial}{\partial a_i}\mathcal{S}(\vp). \numberthis \label{eq:decoupled_derivative_of_boltz}
\end{align*}
Notice that
\begin{align*}
    \frac{\partial p_i}{\partial a_j}
    &= \frac{\partial}{\partial a_j} \frac{e^{a_i}}{\sum_{k=1}^n e^{a_k}}\\
    &= \frac{\frac{\partial}{\partial a_j} e^{a_i} \cdot \sum_{k=1}^n e^{a_k} - e^{a_i} \cdot \frac{\partial}{\partial a_j}\sum_{k=1}^n e^{a_{k}}}{\left(\sum_{k=1}^n e^{a_{k}}\right)^2} \\
    &= \frac{\delta_{ij} e^{a_j} \cdot \sum_{k=1}^n e^{a_k} - e^{a_i} \cdot e^{a_{j}}}{\left(\sum_{k=1}^n e^{a_{k}}\right)^2} 
    = p_j (\delta_{ij} - p_i) \numberthis \label{eq:derivative_of_p}
\end{align*}
holds for any $i,j \in [n]$. Here $\delta_{ij}$ is the Kronecker delta, that is,
\begin{align}
    \delta_{ij} = \begin{cases}
        0 & \text{if $i \neq j$}, \\
        1 & \text{if $i=j$}.
    \end{cases}
\end{align}
Thus,
\begin{align*}
    \frac{\partial \mathcal{L}(\va)}{\partial a_i}
    &= \frac{e^{a_i}}{\sum_{j=1}^n e^{a_j}}
    = p_i, \numberthis \label{eq:derivative_of_logsumexp}\\
    \frac{\partial \mathcal{S}(\vp)}{\partial a_i}
    &= \sum_{j=1}^n \frac{\partial \mathcal{S}(\vp)}{\partial p_j} \cdot \frac{\partial p_j}{\partial a_i} \\
    &= \sum_{j=1}^n \frac{\partial}{\partial p_j} \left(-\sum_{k=1}^n p_k \log p_k\right) \cdot p_i(\delta_{ji} - p_j) \\
    &= \sum_{j=1}^n \left(-\log p_j - 1\right) \cdot p_i(\delta_{ji} - p_j) \\
    &= -p_i \sum_{j=1}^n \left[\delta_{ji} (\log p_j + 1) - p_j\log p_j - p_j\right] \\
    &= -p_i \left(\log p_i + 1 + \mathcal{S}(\vp) - 1\right) \\
    &= -p_i(\log p_i + \mathcal{S}(\vp)). \numberthis \label{eq:derivative_of_entropy}
\end{align*}
Plugging \eqref{eq:derivative_of_logsumexp} and \eqref{eq:derivative_of_entropy} into \eqref{eq:decoupled_derivative_of_boltz}, we have
\begin{align*}
    \frac{\partial}{\partial a_i}\boltz(\va)
    &= \frac{\partial}{\partial a_i}\mathcal{L}(\va) - \frac{\partial}{\partial a_i}\mathcal{S}(\vp) \\
    &= p_i + p_i(\log p_i + \mathcal{S}(\vp)) \\
    &= p_i(1 + \log p_i + \mathcal{S}(\vp)). \numberthis
\end{align*}
In particular, the derivative of the Boltzmann operator at index $i$ is negative when
\begin{align*}
    1 + \mathcal{S}(\vp) + \log p_i < 0
    & \Leftrightarrow 1 + \mathcal{S}(\vp) + \left(a_i - \mathcal{L}(\va)\right) < 0 \\
    & \Leftrightarrow a_i < \mathcal{L}(\va) - \mathcal{S}(\vp) - 1. \numberthis
\end{align*}
Since $\max_{j \in [n]} a_j \leq \mathcal{L}(\va)$(p.72, \cite{BoydVandenbergheConvex2004}) and $\mathcal{S}(\vp) \leq \log n$, we have $\frac{\partial}{\partial a_i} \boltz(\va) < 0$ whenever
\begin{align}
    a_i < \max_{j \in [n]} a_j - \log n - 1
\end{align}
holds.
\end{proof}

\begin{lem}[Concavity]\label{lem:concavity_of_boltzmann}
Let $n \in \mathbb{N}_+$, $i \in [n]$ and $\va = (a_1,\dots,a_n) \in \R^n$.
Then, the Boltzmann operator $\boltz(\va)$ is concave with respect to $a_i$, that is,
\begin{align}
    \frac{\partial^2}{\partial a_i^2}\boltz(\va) < 0
\end{align}
holds in a domain where $\va$ satisfies
\begin{align}
    a_i < \max_{j \in [n]} a_j - \log n - 3. \label{eq:condition_of_concavity}
\end{align}
\end{lem}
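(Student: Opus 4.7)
The plan is to differentiate the closed-form first derivative
$\partial_{a_i}\boltz(\va) = p_i(1 + \log p_i + \mathcal{S}(\vp))$
established in \cref{lem:monotonicity_of_boltzmann}, and then bound each piece of the resulting expression using the standing hypothesis \eqref{eq:condition_of_concavity}. The whole argument is self-contained given the identities already derived for the first derivative.

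First I would apply the product rule. Using $\partial p_i/\partial a_i = p_i(1-p_i)$ (from \eqref{eq:derivative_of_p} with $j = i$), we obtain $\partial_{a_i}\log p_i = (1/p_i)\cdot p_i(1-p_i) = 1-p_i$. Combining this with the entropy derivative $\partial_{a_i}\mathcal{S}(\vp) = -p_i(\log p_i + \mathcal{S}(\vp))$ from \eqref{eq:derivative_of_entropy} and collecting terms yields
\begin{align*}
\frac{\partial^2}{\partial a_i^2}\boltz(\va)
= 2 p_i(1-p_i) + p_i(1-2p_i)\bigl(\log p_i + \mathcal{S}(\vp)\bigr).
\end{align*}
Next I would exploit the hypothesis $a_i < \max_j a_j - \log n - 3$. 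Since $\mathcal{L}(\va) \geq \max_j a_j$, we get $\log p_i = a_i - \mathcal{L}(\va) < -\log n - 3$, and together with the standard bound $\mathcal{S}(\vp) \leq \log n$ this gives $\log p_i + \mathcal{S}(\vp) < -3$. Likewise $p_i \leq e^{a_i - \max_j a_j} < 1/(n e^3) < 1/4$, so both $1 - 2p_i > 0$ and $4p_i - 1 < 0$. Plugging these bounds into the formula above,
\begin{align*}
\frac{\partial^2}{\partial a_i^2}\boltz(\va)
< 2 p_i(1-p_i) - 3 p_i(1-2p_i)
= p_i(4p_i - 1) < 0,
\end{align*}
which is the desired conclusion.

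The hard part is essentially careful bookkeeping: the second-derivative formula combines three chain-rule pieces (the derivative of $p_i$, of $\log p_i$, and of $\mathcal{S}(\vp)$), and the constant $3$ appearing in \eqref{eq:condition_of_concavity} is calibrated precisely so that $\log p_i + \mathcal{S}(\vp) < -3$ suffices to overwhelm the positive term $2p_i(1-p_i)$ after multiplication by $p_i(1-2p_i)$. No inequality stronger than those already used in \cref{lem:monotonicity_of_boltzmann} is required.
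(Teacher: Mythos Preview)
Your proposal is correct and follows essentially the same approach as the paper: differentiate the first-derivative formula $\partial_{a_i}\boltz(\va)=p_i(1+\log p_i+\mathcal{S}(\vp))$ via the product rule using the identities \eqref{eq:derivative_of_p} and \eqref{eq:derivative_of_entropy}, then use $\mathcal{L}(\va)\geq\max_j a_j$ and $\mathcal{S}(\vp)\leq\log n$ to deduce $\log p_i+\mathcal{S}(\vp)<-3$ and $p_i<1/4$, which forces the second derivative to be negative. The only difference is cosmetic: the paper factors the second derivative as $p_i[(1-2p_i)(\log p_i+\mathcal{S}(\vp)+1)+1]$ and bounds that bracket directly, whereas you write the equivalent expression $2p_i(1-p_i)+p_i(1-2p_i)(\log p_i+\mathcal{S}(\vp))$ and reduce it to $p_i(4p_i-1)$; both routes use the same key inequalities.
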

\begin{proof}
According to \cref{lem:monotonicity_of_boltzmann}, we have
\begin{align}
    \frac{\partial}{\partial a_i}\boltz(\va)
    = p_i(1 + \log p_i + \mathcal{S}(\vp)). 
\end{align}
Thus,
\begin{align*}
    \frac{\partial^2}{\partial a_i^2}\boltz(\va)
    &= \frac{\partial}{\partial a_i}\left[p_i(1 + \log p_i + \mathcal{S}(\vp))\right] \\
    &= \frac{\partial p_i}{\partial a_i} \cdot (1 + \log p_i + \mathcal{S}(\vp))
    + p_i \cdot \frac{\partial}{\partial a_i}(1 + \log p_i + \mathcal{S}(\vp)) \\
    &= p_i(1-p_i) \cdot (1 + \log p_i + \mathcal{S}(\vp))
    + p_i \cdot \left[\frac{p_i(1-p_i)}{p_i} - p_i(\log p_i + \mathcal{S}(\vp))\right] \\
    &= p_i\left[(1 - 2p_i)(\log p_i + \mathcal{S}(\vp) + 1) + 1\right], \numberthis
\end{align*}
where we used \eqref{eq:derivative_of_p} and \eqref{eq:derivative_of_entropy}.

Hereafter, we show that the right-hand side of the above equality is negative under the assumption that \eqref{eq:condition_of_concavity} holds. First, the fact that $\max_{j \in [n]} a_j \leq \mathcal{L}(\va)$(p.72, \cite{BoydVandenbergheConvex2004}) and $\mathcal{S}(\vp) \leq \log n$ implies
\begin{align}
    a_i < \max_{j \in [n]} a_j - \log n - 3
    < \mathcal{L}(\va) - \mathcal{S}(\vp) - 3.
\end{align}
It follows from $a_i - \mathcal{L}(\va) = \log p_i$ that
\begin{align}
    a_i < \max_{j \in [n]} a_j - \log n - 3
    \Rightarrow
    \log p_i < - \mathcal{S}(\vp) - 3.
\end{align}
Next, since the entropy $\mathcal{S}(\vp)$ is always non-negative, we have $\log p_i < -3$ under \eqref{eq:condition_of_concavity}.
By using an inequality $e^{-x} \leq \frac{1}{1+x}$ for $x > -1$, this implies
\begin{align}
    p_i < e^{-3} \leq \frac{1}{1 + 3} = \frac{1}{4}.
\end{align}
Thus, as long as \eqref{eq:condition_of_concavity} holds, we have
\begin{align}
    (1 - 2p_i)(\log p_i + \mathcal{S}(\vp) + 1) + 1
    &< \left(1 - 2\cdot \frac{1}{4}\right) \cdot (-2) + 1
    = 0,
\end{align}
which in turn implies $\frac{\partial^2}{\partial a_i^2}\boltz(\va) < 0$.
\end{proof}

\begin{lem}\label{lem:difference_of_boltzmann}
    Let $n \geq 2$ and $\va = (a_1,\dots,a_{n-1},a_n), \vb = (b_1, \dots b_{n-1}, b_n) \in \R^n$ be sequences such that the first $n-1$ elements of $\va$ and $\vb$ match, that is, $a_i = b_i$ for all $i \in [n-1]$. In addition, if
    \begin{align}
        \max_{i \in [n-1]} a_i - \delta
        > a_n
        > b_n
    \end{align}
    with $\delta > \log n + 3$, the difference $\boltz(\va) - \boltz(\vb)$ is lower-bounded by
    \begin{align}
        \boltz(\vb) - \boltz(\va)
        > (a_n - b_n) (\delta + a_n - b_n - \log n - 1) \cdot \frac{e^{b_n}}{\sum_{i=1}^n e^{b_i}}.
    \end{align}
\end{lem}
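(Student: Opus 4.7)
The plan is to reduce the statement to a one-dimensional problem by parametrising the single coordinate that differs, namely writing $\va(t) = (a_1,\dots,a_{n-1},t)$, so that $\va(a_n) = \va$ and $\va(b_n) = \vb$. By the fundamental theorem of calculus,
\begin{align*}
    \boltz(\vb) - \boltz(\va)
    = -\int_{b_n}^{a_n} \frac{d}{dt}\boltz(\va(t))\, dt.
\end{align*}
For every $t \in [b_n, a_n]$ we have $t \leq a_n < \max_{i \leq n-1} a_i - \delta$, and since $\delta > \log n + 3$, the hypothesis $t < \max_j (\va(t))_j - \log n - 3$ of \cref{lem:concavity_of_boltzmann} and the weaker hypothesis of \cref{lem:monotonicity_of_boltzmann} hold throughout the interval. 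Hence the integrand $\frac{d}{dt}\boltz(\va(t))$ is negative, and the map $t \mapsto \boltz(\va(t))$ is concave on $[b_n,a_n]$.

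Next I would use concavity to replace the integral by its value at the left endpoint. Concavity means $\frac{d}{dt}\boltz(\va(t))$ is non-increasing, so $-\frac{d}{dt}\boltz(\va(t))$ is non-decreasing and attains its minimum on $[b_n, a_n]$ at $t = b_n$. Therefore
\begin{align*}
    \boltz(\vb) - \boltz(\va)
    \geq (a_n - b_n)\cdot\left(-\frac{d}{dt}\boltz(\va(t))\bigg|_{t = b_n}\right).
\end{align*}

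Then I would plug in the explicit derivative formula from \cref{lem:monotonicity_of_boltzmann}, namely $\frac{d}{dt}\boltz(\va(t))|_{t=b_n} = p_n^{(\vb)}\bigl(1 + \log p_n^{(\vb)} + \mathcal{S}(\vp^{(\vb)})\bigr)$ with $p_n^{(\vb)} = e^{b_n}/\sum_i e^{b_i}$, and control the bracket. Using $\mathcal{S}(\vp^{(\vb)}) \leq \log n$ and $-\log p_n^{(\vb)} = \mathcal{L}(\vb) - b_n \geq \max_{i \leq n-1} b_i - b_n = \max_{i\leq n-1} a_i - b_n > \delta + (a_n - b_n)$ — where the last inequality uses the hypothesis $\max_{i\leq n-1} a_i - \delta > a_n$ — I obtain
\begin{align*}
    -\bigl(1 + \log p_n^{(\vb)} + \mathcal{S}(\vp^{(\vb)})\bigr)
    \;\geq\; -\log p_n^{(\vb)} - 1 - \log n
    \;>\; \delta + (a_n - b_n) - \log n - 1.
\end{align*}
Multiplying by $(a_n - b_n)\, p_n^{(\vb)}$ yields exactly the claimed lower bound.

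There is no genuinely hard step here; the only place one has to be careful is in verifying that the concavity and sign hypotheses of \cref{lem:monotonicity_of_boltzmann,lem:concavity_of_boltzmann} are satisfied uniformly for every $t \in [b_n, a_n]$, not merely at the endpoints, which is why the assumption $\delta > \log n + 3$ (as opposed to $\delta > \log n + 1$) is needed. The rest is bookkeeping of signs and a single application of $\mathcal{L}(\vb) \geq \max_i b_i$.
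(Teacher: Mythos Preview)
Your proposal is correct and follows essentially the same approach as the paper. The paper phrases the key step as the tangent-line inequality for concave functions, $\boltz(\vb) + (a_n-b_n)\,\partial_x\boltz(b_1,\dots,b_{n-1},x)\big|_{x=b_n} > \boltz(\va)$, while you reach the identical inequality via the fundamental theorem of calculus together with monotonicity of the derivative; the subsequent bounding of $-(1+\log p_n+\mathcal{S}(\vp))$ using $\mathcal{S}\le\log n$ and $\mathcal{L}(\vb)\ge\max_i b_i$ is done exactly as in the paper.
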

\begin{proof}
    According to the monotonicity (\cref{lem:monotonicity_of_boltzmann}) and concavity (\cref{lem:concavity_of_boltzmann}) of the Boltzmann operator, we have
    \begin{align}
        \boltz(b_1, \dots, b_{n-1}, x)
        + (a_n - x) \cdot \frac{\partial}{\partial x} \boltz(b_1,\dots,b_{n-1},x)
        > \boltz(b_1,\dots,b_{n-1},a_n)
    \end{align}
    for any $x < a_n$. In particular, by setting $x = b_n$ and using \eqref{eq:derivative_of_boltzmann}, this implies
    \begin{align*}
        \boltz(\vb) - \boltz(\va)
        &> (a_n - b_n) \cdot \left(-\left.\frac{\partial}{\partial x} \boltz(b_1,\dots,b_{n-1},x)\right|_{x=b_n}\right) \\
        &= (a_n - b_n)  \cdot \left[-p_n(1 + \log p_n + \mathcal{S}(\vp))\right] \\
        &> (a_n - b_n)  \cdot \left[-p_n\left(1 + b_n - \max_{i \in [n]}b_i+\log n\right)\right] \\
        &> (a_n - b_n) \cdot p_n (\delta + a_n - b_n - \log n - 1), \numberthis
    \end{align*}
    where $\vp = (p_1,\dots,p_n) \in \R^n$ is the softmax function of $\vb$, i.e., $\vp = \sigma_S[\vb]$.
\end{proof}

\subsection{\texorpdfstring{Proof of \cref{lem: boltzmann_separation}}{}}
Before moving on to the proof, we first illustrate the proof sketch of \cref{lem: boltzmann_separation} using a simple example.

Since the Boltzmann operator is permutation invariant, without loss of generality we assume the elements of $\va^{(i)}$ and $\va^{(j)}$ are sorted in descending order, e.g., $\va^{(i)} = (8,6,5)$ and $\va^{(j)}=(4,3,1)$.
In this case, since the Boltzmann operator $\boltz$ can be regarded as an approximation of $\max$, we have 
\begin{equation}
    \boltz(\va^{(i)}) \approx 8 > 4 \approx \boltz(\va^{(j)}),
\end{equation}
and so the Boltzmann operator can readily separate these two inputs.

The subtle case is where the initial tokens of $a^{(i)}$ and $a^{(j)}$ are identical, but the rest of each sequences differs, like $\va^{(i)} = (8,6,5)$ and $\va^{(j)}=(8,3,1)$.
However, a closer observation reveals that if the first coordinate and the second one of the input $\va \in \R^n$ are well-separated, then $\boltz(\va)$ is monotonically decreasing for each coordinate $k=2,\dots,n$. In the above example, this intuitively implies
\begin{equation}
    \boltz\left(\va^{(j)}\right)
    \geq
    \boltz\left(\overline{\va}^{(j)}\right)
    \quad \text{with $\overline{\va}^{(j)}=(8,3,3)$}
\end{equation}
and
\begin{equation}
    \boltz\left(\va^{(i)}\right)
    \leq
    \boltz\left(\underline{\va}^{(i)}\right)
    \quad \text{with $\underline{\va}^{(i)}=(8,6,-\infty)$},
\end{equation}
or by abuse of notation, $\boltz\left(\va^{(i)}\right)
    \leq \boltz\left(8,6\right)$.

Thus, if we can show $\boltz\left(\underline{\va}^{(i)}\right) < \boltz\left(\overline{\va}^{(j)}\right)$, then $\boltz\left(\va^{(i)}\right) < \boltz\left(\va^{(j)}\right)$ holds and we know that the Boltzmann operator can also separate this pattern of inputs.
Fortunately, this is indeed the case if each element of the inputs is sufficiently separated, because
\begin{align*}
    \boltz\left(\overline{\va}^{(j)}\right)
    = \frac{8e^8 + 3e^3 + 3e^3}{e^8 + e^3 + e^3}
    &= \frac{8e^8 + 2\cdot 3e^3}{e^8 + 2\cdot e^3} \\
    &\approx \frac{8e^8 + (3+\log 2)e^{3+\log 2}}{e^8 + \cdot e^{3+\log 2}}
    = \boltz\left(8, 3+\log 2\right), \numberthis
\end{align*}
and we have $\boltz\left(8,6\right) < \boltz\left(8, 3+\log 2\right)$ by the monotonicity of the Boltzmann operator.

\begin{proof}[Proof of \cref{lem: boltzmann_separation}]
    We only have to show the case of $m = 2$, and for notational convenience, $\va^{(1)}$ (resp. $\va^{(2)}$) hereafter is denoted by $\va$ (resp. $\vb$). Also, since the Boltzmann operator is permutation invariant, we assume without loss of generality $a_1 > \cdots > a_n$ and $b_1 > \cdots > b_n$ (Since there is no duplicate element in $\va$, $a_k$ is strictly greater than $a_l$ for any $k < l$. The same holds for $\vb$).

    First, since the Boltzmann operator can be regarded as weighting averaging, we have
    \begin{align}
        |\boltz(\va)| \leq \max(|a_1|,|a_n|) < r.
    \end{align}
    For $\delta'$-separatedness, if $\va \neq \vb$, w.l.o.g. we assume that there exists $k \in \{0,\dots,n-1\}$ such that
    \begin{align}
        (a_1,\dots,a_k) = (b_1,\dots,b_k) \text{ and }a_{k+1} > b_{k+1}.
    \end{align}
    Then, \cref{lem:eval_of_boltzmann_separation} implies that we have
    \begin{align}
        \left|\boltz(\va) - \boltz(\vb)\right|
        &> (\log n)^2 e^{-(a_1 - b_{k+1})}.
    \end{align}
    Since $\va$ and $\vb$ are tokenwise $(r,\delta)$-separated, $a_1 - b_{k+1} < 2r$ holds. Thus, the right-hand side of the above inequality is further lower-bounded by
    \begin{align}
        \left|\boltz(\va) - \boltz(\vb)\right|
        &> (\log n)^2 e^{-(a_1 - b_{k+1})}
        > (\log n)^2 e^{-2r}.
    \end{align}
\end{proof}

\begin{lem}\label{lem:eval_of_boltzmann_separation}
Let $\va,\vb \in \R^n$ be tokenwise $\delta$-separated vectors in a decreasing order with no duplicate element in each vector and $\delta > 2\log n + 3$, that is,
\begin{align}
    i > j &\Rightarrow a_i - a_j, b_i - b_j > \delta, \\
    a_i \neq b_j &\Rightarrow |a_i - b_j| > \delta
\end{align}
for any $i,j \in [n]$.

In addition, suppose there exists $k \in \{0,\dots,n-1\}$ such that
\begin{align}
    (a_1,\dots,a_k) = (b_1,\dots,b_k) \text{ and }a_{k+1} > b_{k+1}.
\end{align}
Then, the outputs of the Boltzmann operator are $(\log n)^2 e^{-(a_1 - b_{k+1})}$-separated, that is,
\begin{align}
    \left|\boltz(\va) - \boltz(\vb)\right|
    &> (\log n)^2 e^{-(a_1 - b_{k+1})}
\end{align}
holds.
\end{lem}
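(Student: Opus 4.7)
My plan is to exploit the monotonicity of the Boltzmann operator in its ``bulk'' arguments (\cref{lem:monotonicity_of_boltzmann}) to replace $\va$ and $\vb$ by shorter, more easily comparable sequences, and then invoke the one-coordinate-difference estimate \cref{lem:difference_of_boltzmann}. The hypothesis $\delta > 2\log n + 3$ ensures that every coordinate I will move lies comfortably below $a_1 - \log n - 1$, so monotonicity applies along each intermediate path.

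I would first dispose of the boundary case $k = 0$ by a direct scalar argument. Here $a_1 > b_1$, and the standard sandwich $\max_i x_i - \log n \leq \boltz(\vx) \leq \max_i x_i$ (immediate from $\mathcal{L} \in [\max,\max+\log n]$ and $\mathcal{S} \in [0,\log n]$) yields $\boltz(\va) - \boltz(\vb) \geq (a_1 - b_1) - \log n > \delta - \log n > \log n + 3$, which trivially dominates the target $(\log n)^2 e^{-(a_1 - b_1)} \leq (\log n)^2 e^{-\delta}$.

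For the main case $k \geq 1$, I introduce $\underline{\va} := (a_1, \dots, a_{k+1})$ (the leading block of $\va$) and $\overline{\vb} := (a_1, \dots, a_k, b_{k+1}, \dots, b_{k+1})$ with $n-k$ copies of $b_{k+1}$ (obtained from $\vb$ by pushing the tail $b_{k+2}, \dots, b_n$ up to $b_{k+1}$). Dropping a coordinate from $\va$ is the limit of sending it to $-\infty$, so monotonicity gives $\boltz(\va) \leq \boltz(\underline{\va})$; pushing the tail of $\vb$ up inside the bulk regime gives $\boltz(\vb) \geq \boltz(\overline{\vb})$. It therefore suffices to lower-bound $\boltz(\overline{\vb}) - \boltz(\underline{\va})$. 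I would then collapse the $n-k$ repeated tail of $\overline{\vb}$ into an effective single entry $\tilde b := b_{k+1} + \log(n-k)$, checking by direct computation the identity
\[
\boltz(\overline{\vb}) = \boltz(a_1, \dots, a_k, \tilde b) - \log(n-k)\cdot \tilde p, \qquad \tilde p := \frac{e^{\tilde b}}{\sum_{i \leq k} e^{a_i} + e^{\tilde b}},
\]
which holds because merging preserves the denominator but shifts the effective value on the merged mass by $b_{k+1} - \tilde b = -\log(n-k)$.

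Finally, $(a_1, \dots, a_k, \tilde b)$ and $\underline{\va}$ differ only in their last coordinate, with $a_{k+1} > \tilde b$ and $a_{k+1} - \tilde b > \delta - \log n > \log n + 3$, so the hypotheses of \cref{lem:difference_of_boltzmann} are met; plugging in $\delta > 2\log n + 3$ produces a lower bound of at least $2(\log n)^2 \cdot \tilde p$. This quadratic factor absorbs the $\log(n-k)\tilde p \leq (\log n)\tilde p$ merge correction and still leaves strictly more than $(\log n)^2 \tilde p$. The geometric-series bound $\sum_{i\leq k} e^{a_i} \leq e^{a_1}/(1-e^{-\delta})$ combined with $e^{\tilde b} \leq e^{a_1}$ then yields $\tilde p \geq \tfrac12 e^{\tilde b - a_1} \geq \tfrac12 e^{-(a_1 - b_{k+1})}$, and the spare factor of $2$ in the Lemma-7-based bound cancels this $\tfrac12$, giving the claimed $(\log n)^2 e^{-(a_1 - b_{k+1})}$. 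The main obstacle is exactly this last piece of bookkeeping, namely showing that the $(\log n)^2$ contribution from \cref{lem:difference_of_boltzmann} survives both the merge correction and the factor of $\tfrac12$ in the denominator estimate for $\tilde p$; the assumption $\delta > 2\log n + 3$ is essentially the minimum separation that makes all three ingredients fit simultaneously.
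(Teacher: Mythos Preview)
Your proposal is correct and follows essentially the same route as the paper's own proof: for $k\geq 1$ you introduce exactly the same comparison vectors $\underline{\va}=(a_1,\dots,a_{k+1})$ and $\overline{\vb}=(a_1,\dots,a_k,b_{k+1},\dots,b_{k+1})$, use monotonicity to sandwich, collapse the repeated tail into the single effective entry $\tilde b=b_{k+1}+\log(n-k)$ via the same identity, apply \cref{lem:difference_of_boltzmann}, and finish with the geometric-series bound $\sum_{i\leq k}e^{a_i}+e^{\tilde b}<2e^{a_1}$. The only (minor) difference is your handling of $k=0$: you use the direct sandwich $\max_i x_i-\log n\leq\boltz(\vx)\leq\max_i x_i$, whereas the paper builds an explicit comparison vector $\overline{\va}=(a_1,a_1-\delta,\dots,a_1-\delta)$ and computes $\boltz(\overline{\va})$ in closed form; both land on a bound of order $\log n$, which dominates the target.
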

\begin{proof}
    We show the lemma by dividing it into the following two cases:
    \begin{enumerate}
        \item $k\geq1$
        
        Let $\underline{\va}$ and $\overline{\vb}$ be
        \begin{align}
            \underline{\va}
            &= \left(a_1, a_2, \dots, a_k, a_{k+1}\right) \in \R^{k+1} \\
            \overline{\vb}
            &= \left(a_1, a_2, \cdots, a_{k}, b_{k+1}, b_{k+1}, \dots, b_{k+1}\right) \in \R^n.
        \end{align}
        Then, by abuse of notation, \cref{lem:monotonicity_of_boltzmann} implies that
        \begin{align}
            \boltz\left(\va\right)
            < \underline{\va}^\top \sigma_S\left[\underline{\va}\right]
            = \boltz\left(\underline{\va}\right)
            \quad \text{and} \quad
            \boltz\left(\vb\right)
            > \boltz\left(\overline{\vb}\right),
        \end{align}
        and we only have to evaluate the magnitude of the difference $\boltz\left(\overline{\vb}\right) - \boltz\left(\underline{\va}\right)$.

        Let $\gamma_k$ and $\xi_k$ be
        \begin{align}
            \gamma_k = \sum_{l=1}^{k} a_l e^{a_l}
            \quad \text{and} \quad
            \xi_k = \sum_{l=1}^{k} e^{a_l}.
        \end{align}
        Then, $\boltz\left(\overline{\vb}\right)$ can be decomposed into
        \begin{align*}
            \boltz\left(\overline{\vb}\right)
            &= \frac{\gamma_k + (n-k)b_{k+1}e^{b_{k+1}}}{\xi_k + (n-k)e^{b_{k+1}}} \\
            &= \frac{\gamma_k + b_{k+1}e^{b_{k+1} + \log (n-k)}}{\xi_k + e^{b_{k+1} + \log (n-k)}} \\
            &= \frac{\gamma_k + \left(b_{k+1} + \log (n-k)\right)e^{b_{k+1} + \log (n-k)}}{\xi_k + e^{b_{k+1} + \log (n-k)}}
            - \frac{\log(n-k) \cdot e^{b_{k+1} + \log (n-k)}}{\xi_k + e^{b_{k+1}+\log (n-k)}} \\
            &= \boltz\left(a_1,\dots,a_k, b_{k+1}+\log(n-k)\right)
            - \frac{\log(n-k) \cdot e^{b_{k+1} + \log (n-k)}}{\xi_k + e^{b_{k+1}+\log (n-k)}}. \numberthis
        \end{align*}
        
        Therefore, the difference $\boltz\left(\overline{\vb}\right) - \boltz\left(\underline{\va}\right)$ can be written as
        \begin{align}\label{eq:difference_of_boltzmann}
            \boltz\left(\overline{\vb}\right) - \boltz\left(\underline{\va}\right)
            &= \boltz\left(a_1,\dots,a_k, b_{k+1}+\log(n-k)\right)
            - \boltz\left(\underline{\va}\right) \nonumber\\
            &\quad\quad\quad\quad\quad
            - \frac{\log(n-k) \cdot e^{b_{k+1} + \log (n-k)}}{\xi_k + e^{b_{k+1}+\log (n-k)}}.
        \end{align}
        According to \cref{lem:difference_of_boltzmann}, the first two terms on the right-hand side are evaluated as
        \begin{align*}
            &\boltz\left(a_1,\dots,a_k, b_{k+1}+\log(n-k)\right)
            - \boltz\left(\underline{\va}\right) \\
            &> (a_{k+1} - b_{k+1}-\log(n-k))(\delta + a_{k+1} - b_{k+1}-\log(n-k) - \log (k+1) - 1) \\
            &\quad\quad\quad\quad\cdot \frac{e^{b_{k+1} + \log (n-k)}}{\xi_k + e^{b_{k+1}+\log (n-k)}} \\
            &> (\delta - \log n)(2\delta - 2\log n - 1)
            \cdot \frac{e^{b_{k+1} + \log (n-k)}}{\xi_k + e^{b_{k+1}+\log (n-k)}}. \numberthis
        \end{align*}
        Since $\delta > 2\log n + 3$ by assumption, the above inequality is further lower-bounded by
        \begin{align*}
            &\boltz\left(a_1,\dots,a_k, b_{k+1}+\log(n-k)\right)
            - \boltz\left(\underline{\va}\right) \\
            &> (\delta - \log n)(2\delta - 2\log n - 1)
            \cdot \frac{e^{b_{k+1} + \log (n-k)}}{\xi_k + e^{b_{k+1}+\log (n-k)}} \\
            &> (\log n + 3)(2\log n + 5)
            \cdot \frac{e^{b_{k+1} + \log (n-k)}}{\xi_k + e^{b_{k+1}+\log (n-k)}}. \numberthis
        \end{align*}
        Plugging the above inequality into \eqref{eq:difference_of_boltzmann}, we see 
        \begin{align*}
            \boltz\left(\overline{\vb}\right) - \boltz\left(\underline{\va}\right)
            &= \boltz\left(a_1,\dots,a_k, b_{k+1}+\log(n-k)\right)
            - \boltz\left(\underline{\va}\right) \nonumber\\
            &\quad\quad\quad\quad\quad
            - \frac{\log(n-k) \cdot e^{b_{k+1} + \log (n-k)}}{\xi_k + e^{b_{k+1}+\log (n-k)}} \\
            &> \frac{e^{b_{k+1} + \log (n-k)}}{\xi_k + e^{b_{k+1}+\log (n-k)}}
            \left[(\log n + 3)(2\log n + 5) - \log (n-k)\right] \\
            &> \frac{e^{b_{k+1} + \log (n-k)}}{\xi_k + e^{b_{k+1}+\log (n-k)}}
            \cdot 2(\log n)^2. \numberthis \label{eq:difference_of_boltz_last}
        \end{align*}
        Lastly, notice that the following inequality follows from $\delta$-separatedness of $\va$ and $\vb$ and the assumption that $\va$ has no duplicate token:
        \begin{align*}
            \xi_k + e^{b_{k+1}+\log (n-k)}
            &< \sum_{l=1}^{k+1} e^{a_l} \quad (\because a_{k+1} > b_{k+1}+\log (n-k))\\
            &< e^{a_1} \sum_{l=1}^{k+1} e^{-(l-1)\delta} \\
            &< 2e^{a_1} \quad (\because \delta > \log 2). \numberthis
        \end{align*}
        By using this inequality, \eqref{eq:difference_of_boltz_last} is lower-bounded by
        \begin{align*}
            \boltz\left(\overline{\vb}\right) - \boltz\left(\underline{\va}\right)
            &> 
            \frac{e^{b_{k+1} + \log (n-k)}}{\xi_k + e^{b_{k+1}+\log (n-k)}}
            \cdot 2(\log n)^2 \\
            &> \frac{e^{b_{k+1} + \log (n-k)}}{2e^{a_1}}
            \cdot 2(\log n)^2 \\
            &> (\log n)^2 e^{-(a_1-b_{k+1})}, \numberthis
        \end{align*}
        which implies $\boltz(\vb) - \boltz(\va) > (\log n)^2 e^{-(a_1-b_{k+1})}$.

        \item $k = 0$

        Since the Boltzmann operator can be regarded as weighted averaging, $\boltz(\vb) \leq b_1$ always holds. Thus, it is enough to evaluate how much greater $\boltz(\va)$ is than $b_1$.

        Let $\overline{\va} \in \R^n$ be
        \begin{align*}
            \overline{\va}
            =
            (a_1, a_1-\delta, \dots, a_1-\delta). \numberthis
        \end{align*}
        Then, $\boltz(\va) > \boltz(\overline{\va})$ follows from \cref{lem:monotonicity_of_boltzmann}, and its value is
        \begin{align*}
            \boltz\left(\overline{\va}\right)
            &= \frac{a_1e^{a_1} + (n-1)(a_1-\delta)e^{a_1-\delta}}{e^{a_1} + (n-1)e^{a_1-\delta}} \\
            &= \frac{a_1 + (n-1)(a_1-\delta)e^{-\delta}}{1 + (n-1)e^{-\delta}} \\
            &= a_1 - \frac{(n-1)\delta e^{-\delta}}{1 + (n-1)e^{-\delta}}. \numberthis
        \end{align*}
        Therefore, the difference $\boltz\left(\va\right) - \boltz\left(\vb\right)$ is
        \begin{align*}
            \boltz\left(\va\right) - \boltz\left(\vb\right)
            &\geq \boltz\left(\overline{\va}\right) - b_1 \\
            &> a_1 - \frac{(n-1)\delta e^{-\delta}}{1 + (n-1)e^{-\delta}}
            - (a_1 - \delta) \\
            &= \delta - \frac{(n-1)\delta e^{-\delta}}{1 + (n-1)e^{-\delta}} \\
            &= \frac{\delta}{1 + (n-1)e^{-\delta}} \\
            &\geq \log n, \numberthis
        \end{align*}
        where the last inequality follows from the assumption $\delta > 2\log n$.
        Note that the right-hand side is greater than $(\log n)^2 e^{-(a_1-b_1)}$, because $a_1 - b_1 > \log n$ implies $\log n \cdot e^{-(a_1-b_1)} < 1$.
    \end{enumerate}
\end{proof}

\section{Extension to Masked Self-Attention}\label{sec:masked_self_attention}
Masked self-attention mechanisms are formulated as follows: the Softmax part in \eqref{eq:formulation_of_attention} is replaced by 
\begin{align}
    \sigma_S \left[
    \left(\mW_{l,i}^{(K)}\mZ \right)^\top
    \left(\mW_{l,i}^{(Q)}\mZ \right)
    + \mC
    \right]
    \in \R^{d \times n},
\end{align}
with some masking matrix $\mC \in \R^{n \times n}$ whose elements are either $0$ or $-\infty$.

Our main result \cref{thm:softmax_is_contextual_mapping} can be readily extended to the case where attention masking are conducted. The idea is that
\begin{align*}
    \boltz(\va + \vc) 
    &= (\va + \vc)^\top \sigma_S\left[\va + \vc\right] \\
    &= \va^\top \sigma_S\left[\va + \vc\right]
    + \vc^\top \sigma_S\left[\va + \vc\right] \\
    &= \va^\top \sigma_S\left[\va + \vc\right] \numberthis
\end{align*}
holds for any masking vector $\vc \in \R^n$ whose elements are either $0$ or $-\infty$.
Thus, in order to ensure that the masked attention is a contextual mapping, it is sufficient to verify $\boltz(\va + \vc)$ are well-separated.
The caveat is that the Boltzmann operator now has to distinguish inputs consisting of the same attention $\va$, but different masks $\vc_1, \vc_2$.
However, the separability in this case can also be proved in the same way as in the proof of \cref{lem: boltzmann_separation}.

\end{document}